%% file: iclr2026_conference.tex
\documentclass{article} 
\usepackage{iclr2027_conference,times}

\input{math_commands.tex}

\usepackage{hyperref}
\usepackage{url}
\usepackage{amsmath,amssymb}
\usepackage{amsthm}
\usepackage{booktabs}
\usepackage{graphicx}
\usepackage{tikz}
\usetikzlibrary{arrows.meta,positioning}

\newtheorem{assumption}{Assumption}
\newtheorem{proposition}{Proposition}
\newtheorem{theorem}{Theorem}
\newtheorem{lemma}{Lemma}
\newtheorem{remark}{Remark}

\title{Intervention, Not Shared Latents: \\ Blocking Visual Shortcuts
in Audio--Video Generation}

\iclrfinalcopy
\author{Jian Xu\textsuperscript{1,2}, \quad Delu Zeng\textsuperscript{3}, \quad
John Paisley\textsuperscript{4}\\[0.5em]
\textsuperscript{1}RIKEN iTHEMS \quad \textsuperscript{2}RIKEN AIP \quad
\textsuperscript{3}South China University of Technology \quad
\textsuperscript{4}Columbia University\\[0.3em]
\texttt{jian.xu@riken.jp}}

\newcommand{\xv}{x_{\mathrm{v}}}
\newcommand{\xa}{x_{\mathrm{a}}}
\newcommand{\Ev}{\mathrm{e}}
\newcommand{\nuis}{\mathrm{n}}

\begin{document}

\maketitle
\lhead{}\chead{}\rhead{}  

\begin{abstract}
Joint audio--video (AV) generators are trained on data in which \emph{what an event
looks like} and \emph{what it sounds like} are spuriously correlated. We present a
\emph{controlled causal study} of the resulting failure mode. In an AV structural causal
model where the audio is, by construction, independent of the video's nuisance appearance,
models that let audio read video directly---through cross-attention or a shared
latent---learn a \emph{visual shortcut}: they predict sound from appearance rather than the
causal event and, when the appearance--event correlation is broken at test time, synthesize
the wrong event's sound. Crucially, the popular remedy of routing both modalities through a
\emph{shared common-cause latent} does \emph{not} fix this---a bottleneck, an unsupervised
shared/private factorization, and a faithful shared-prior model all grab the appearance proxy
and fail like the direct model. Blocking the shortcut instead requires an \emph{intervention
on the nuisance}: under the stated assumptions we prove that counterfactual invariance is
necessary and sufficient to identify the causal predictor, and we verify the mechanism from
feature-vector SCMs to procedural pixel video, real images with spectrogram audio, moving
real digits, and a conditional generator. On a \emph{real, pretrained} V2A generator
(MMAudio), an input-intervention test shows the model is far from invariant to
sound-irrelevant edits, though a generic-noise control reveals it is broadly input-brittle
rather than specifically colour-shortcutting---clean isolation of the shortcut needs the
controlled confounds our synthetic studies provide. We characterize \emph{when} the shortcut
occurs, compare the objective against supervised counterfactual augmentation, and isolate the
\emph{unknown-nuisance} regime---where the intervention cannot be applied---as the central
open problem.
\end{abstract}

\section{Introduction}

A generative model of paired audio and video must decide \emph{where the sound
comes from}. In natural data the honest answer is an \emph{event}: a cup hits
the floor, and that single cause produces both a visual impact and an acoustic
one. But the same datasets also contain strong \emph{nuisance} regularities---a
given instrument almost always looks a certain way, a given material almost
always has a certain texture---so that appearance and event are tightly, and
partly spuriously, coupled during training. A model is then free to take a
shortcut \citep{geirhos2020shortcut}: rather than reading the causal event
content out of the video, it can read the nuisance appearance, use it as a proxy
for the event, and generate the associated sound. In-distribution this is
indistinguishable from understanding; under any shift that decouples appearance
from event---a new material, an occluded source, a changed viewpoint---it is
wrong.

Recent joint AV diffusion models couple the two streams either by direct
cross-attention \citep{ruan2023mm,hacohen2026ltx} or, more recently, by a
\emph{shared} spatio-temporal prior that both branches align to
\citep{liu2025javisdit}. The latter is often motivated precisely as a way to avoid
spurious cross-modal coupling. Independent physical-commonsense benchmarks,
however, report that current video-to-audio and joint models remain sensitive
to exactly the visual factors they should ignore
\citep{cui2026joint,li2026benchmarking}, suggesting the shortcut survives these
architectural choices. This paper asks a sharp version of the question: \emph{is
sharing a latent enough to remove the visual shortcut, and if not, what is?}

Our answer is negative for architecture and positive for intervention. We first
formalize the shortcut as a confounded structural causal model (SCM) in which
audio is, by construction, independent of the video's nuisance appearance given
the event and scene. We then show, in controlled settings where ground truth is
known exactly, three things:
\begin{itemize}
  \item \textbf{The shortcut is real and catastrophic.} A direct model attains
  near-zero in-distribution audio error but fails by up to two orders of
  magnitude when appearance is decoupled from event, and the failure is
  \emph{slice-local}: it appears only on the event classes that were confounded
  in training.
  \item \textbf{Sharing a latent does not fix it.} A common-cause bottleneck,
  \emph{and} an unsupervised shared/private factorization that generates audio
  from the shared code alone, both fail as badly as the direct model: when
  appearance perfectly predicts the event, the ``shared'' code simply encodes
  appearance. This is consistent with known impossibility results for
  unsupervised disentanglement \citep{locatello2019challenging}.
  \item \textbf{Counterfactual invariance is necessary and sufficient under the
  stated SCM assumptions.} We enforce it through a counterfactual-consistency
  (do-operator) objective: forcing generated audio to be invariant to
  interventions on the nuisance restores near-oracle robustness at negligible
  in-distribution cost and without harming unconfounded data. When the nuisance
  can be augmented (e.g.\ recoloring, restyling), a single feasible
  augmentation-based intervention suffices---and, importantly, the mechanism
  survives the move from abstract feature vectors to rendered pixel video.
\end{itemize}

\emph{What is and is not new here} and the study's scope are discussed in Appendix~\ref{app:positioning}; limitations and the unknown-nuisance problem in Appendix~\ref{sec:limitations}.

\section{Related Work}

\paragraph{Joint audio--video generation.}
Joint AV diffusion models synchronize modalities through cross-modal attention
or shared backbones \citep{ruan2023mm,xing2024seeing,hacohen2026ltx},
and more recent work introduces an explicit shared spatio-temporal prior that
both branches align to \citep{liu2025javisdit}. These designs target
\emph{synchrony}; we instead target \emph{causal correctness} of the sound given
the event, and we show that a shared latent alone does not deliver it. Physical
reasoning benchmarks for AV generation \citep{cui2026joint,li2026benchmarking}
document that generated audio tracks spurious visual factors, providing external
evidence that the shortcut we isolate occurs in practice.

\paragraph{Shortcut learning, causality, and invariance.}
Shortcut learning \citep{geirhos2020shortcut} and its remedies---invariant risk
minimization \citep{arjovsky2019invariant}, counterfactual invariance
\citep{veitch2021counterfactual}, and neural structural causal models
\citep{yang2021causalvae}---provide the conceptual backbone for our objective.
We contribute the specific instantiation for \emph{generative} audio--video
models: audio generation invariant to interventions on visual nuisance,
together with controlled evidence that the intervention, not the shared-latent
architecture, is what removes the shortcut.

\section{The Visual Shortcut in Audio--Video Generation}
\label{sec:setup}

\paragraph{Structural causal model.}
Let an \emph{event} $\Ev$ and a \emph{scene} $s$ (e.g.\ source distance) be the
common causes of a video $\xv$ and an audio signal $\xa$, and let $\nuis$ be a
\emph{nuisance} appearance factor (material, texture, color) that affects the
video only:
\begin{equation}
  \xv = g_{\mathrm{v}}(\Ev, \nuis) + \epsilon_{\mathrm{v}}, \qquad
  \xa = g_{\mathrm{a}}(\Ev, s) + \epsilon_{\mathrm{a}}, \qquad
  \xa \perp \nuis \mid (\Ev, s).
\end{equation}
\begin{figure}[t]
\centering
\begin{tikzpicture}[
  node distance=9mm and 16mm,
  var/.style={circle,draw,minimum size=8mm,inner sep=1pt,font=\small},
  obs/.style={rectangle,draw,rounded corners,minimum size=8mm,font=\small},
  ->,>={Latex[length=2mm]},thick]
  \node[var] (e) {$\Ev$};
  \node[var,below=of e] (n) {$\nuis$};
  \node[var,above=of e] (s) {$s$};
  \node[obs,right=of e,fill=blue!6] (xv) {$\xv$};
  \node[obs,right=22mm of xv,fill=red!6] (xa) {$\xa$};
  \draw (e) -- (xv); \draw (n) -- (xv);
  \draw (e) to[bend left=12] (xa); \draw (s) to[bend left=22] (xa);
  \draw[dashed,gray] (n) to[bend right=25] node[midway,below,font=\scriptsize,black]{spurious (train only)} (e);
  \draw[red,densely dotted,->] (xv) to[bend left=30] node[midway,above,font=\scriptsize]{shortcut} (xa);
\end{tikzpicture}
\caption{The audio--video SCM. The event $\Ev$ (and scene $s$) cause both video
and audio; the nuisance appearance $\nuis$ affects only the video, and
$\xa\perp\nuis\mid(\Ev,s)$. Training confounds $\nuis$ with $\Ev$ (dashed), so a
model can generate audio via the causal path ($\Ev$ read out of $\xv$) or via the
\textcolor{red}{shortcut} (read $\nuis$ off $\xv$). Our objective forbids the
shortcut by enforcing invariance to interventions on $\nuis$.}
\label{fig:scm}
\end{figure}

By construction the sound depends on the event and scene but \emph{not} on the
video's appearance. Training data, however, is confounded: the nuisance is
correlated with the event, $\nuis = \Ev$ (in appropriate coordinates) with
probability $\rho_{\mathrm{spur}}$, and independent otherwise. A predictor or
conditional generator $f(\xv, s) \approx \xa$ may therefore route through either
the \emph{causal} path (event content in $\xv$) or the \emph{shortcut} (read
$\nuis$ off $\xv$, use it as a proxy for $\Ev$). The two are indistinguishable
in-distribution; we evaluate under the counterfactual shift $\mathrm{do}(\nuis
\!\perp\! \Ev)$ that decouples appearance from event.

\paragraph{Why a shared latent is not enough.}
A common-cause model infers a shared code $z_{\mathrm{s}} = E(\xv)$ and generates
audio from $z_{\mathrm{s}}$ alone. If $\nuis$ perfectly predicts $\Ev$ in
training, then $\nuis$ is a \emph{sufficient statistic} for the audio-relevant
information, and nothing in a reconstruction- or prediction-driven objective
prevents $z_{\mathrm{s}}$ from encoding appearance instead of event. A
shared/private split $\xv \!\mapsto\! (z_{\mathrm{s}}, z_{\mathrm{v}})$ with
audio from $z_{\mathrm{s}}$ does not resolve this either, in line with the
impossibility of unsupervised disentanglement without inductive bias or
supervision \citep{locatello2019challenging}. We confirm both failures
empirically in Section~\ref{sec:experiments}.

\section{Method: Counterfactual Consistency}
\label{sec:method}

We block the shortcut by requiring the generated audio to be invariant to
interventions on the nuisance. Let $\mathrm{do}(\nuis := \nuis')$ denote
replacing the appearance while holding the event and scene fixed, producing a
counterfactual video $\xv^{\,\prime}$. We add to the generation/prediction loss
$\mathcal{L}_{\mathrm{gen}} = \| f(\xv, s) - \xa \|^2$ the
\emph{counterfactual-consistency} term
\begin{equation}
  \mathcal{L}_{\mathrm{cf}} = \mathbb{E}_{\nuis' } \,
  \big\| f(\xv, s) - f\big(\xv^{\,\prime}, s\big) \big\|^2,
  \qquad
  \mathcal{L} = \mathcal{L}_{\mathrm{gen}} + \lambda\, \mathcal{L}_{\mathrm{cf}} .
\end{equation}
This is a do-operator objective: it enforces $f(\mathrm{do}(\nuis)) $
invariance, which under the SCM of Section~\ref{sec:setup} is exactly the
condition that $f$ use only the event/scene path.
Figure~\ref{fig:cfpairs} shows what the intervention looks like concretely in one
of our real-image settings: the event (the digit) is held fixed while the nuisance
(its colour) is resampled, and the objective demands that the generated audio not
change across the row.

\begin{figure}[t]
\centering
\includegraphics[width=0.72\linewidth]{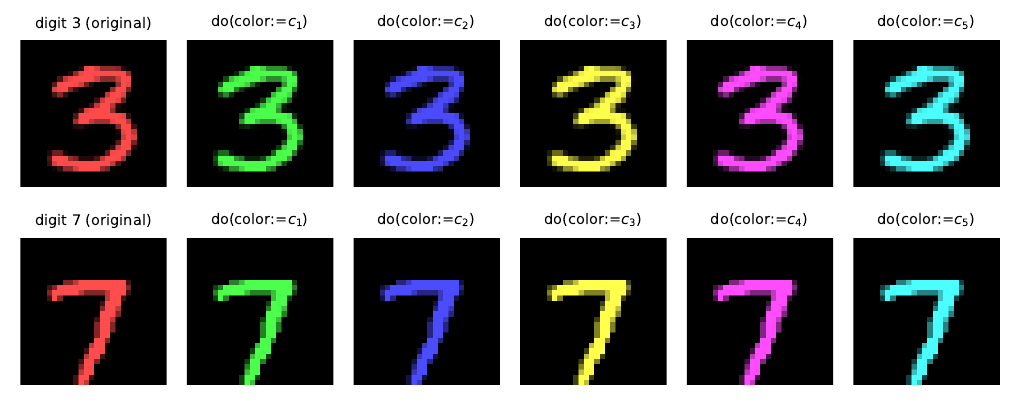}
\caption{The counterfactual intervention $\mathrm{do}(\nuis:=\nuis')$ made concrete
(Colored-MNIST): the event---the digit---is fixed, the nuisance---its colour---is
resampled. $\mathcal{L}_{\mathrm{cf}}$ penalizes any change in the generated audio
along such a row.}
\label{fig:cfpairs}
\end{figure} Two regimes arise. When the
nuisance is \emph{identifiable and augmentable} (color, texture, style), the
counterfactual $\xv^{\,\prime}$ is produced by an appearance augmentation and
$\mathcal{L}_{\mathrm{cf}}$ is directly computable---the practical recipe we
validate. When the nuisance is \emph{unknown}, no augmentation is available, and
one must fall back on representation learning; we show this fallback currently
fails, and we flag it as the open problem (Appendix~\ref{sec:limitations}).

\section{Theory}
\label{sec:theory}

We make the claims of Sections~\ref{sec:setup}--\ref{sec:method} precise. Let the
latent factors $(\Ev,s,\nuis)$ have joint law $P$, let $\xv=g_{\mathrm{v}}(\Ev,\nuis)$,
and write $\mu(\Ev,s):=\mathbb{E}[\xa\mid \Ev,s,\nuis]$. All proofs are deferred to
Appendix~\ref{app:proofs}.

\begin{assumption}[Causal audio]\label{as:a1}
$\mathbb{E}[\xa\mid \Ev,s,\nuis]=\mu(\Ev,s)$ does not depend on $\nuis$; equivalently
$\xa \perp \nuis \mid (\Ev,s)$.
\end{assumption}
\begin{assumption}[Recoverability]\label{as:a2}
$g_{\mathrm{v}}$ is injective in $(\Ev,\nuis)$, so any measurable predictor factors as
$f(\xv,s)=\tilde f(\Ev,\nuis,s)$.
\end{assumption}

Write $R_D(f)=\mathbb{E}_{D}\|f(\xv,s)-\xa\|^2$. Since
$R_D(f)=\mathbb{E}_D\|\tilde f-\mu\|^2+\mathrm{const}$, minimizing risk is equivalent to
matching $\mu$. The \emph{causal predictor} is $f^\star(\xv,s):=\mu(\Ev,s)$. Let
$P_{\mathrm{tr}}$ be the (confounded) training law and $P_{\mathrm{te}}$ the counterfactual
test law with $\nuis\perp(\Ev,s)$ but the same $(\Ev,s)$ marginal.

\begin{proposition}[Confounding makes ERM shortcut-blind]\label{prop:blind}
Suppose $\nuis=\pi(\Ev)$ $P_{\mathrm{tr}}$-a.s.\ for an injective $\pi$ (perfect confounding),
and define the shortcut predictor $f_{\mathrm{sc}}(\xv,s):=\mu(\pi^{-1}(\nuis),s)$. Then
(i) $R_{\mathrm{tr}}(f_{\mathrm{sc}})=R_{\mathrm{tr}}(f^\star)$, so both minimize the training
risk; and (ii)
$R_{\mathrm{te}}(f_{\mathrm{sc}})-R_{\mathrm{te}}(f^\star)
=\mathbb{E}\,\|\mu(\Ev',s)-\mu(\Ev,s)\|^2>0$
with $\Ev'=\pi^{-1}(\nuis)\perp \Ev$, whenever $\mu(\cdot,s)$ is non-constant.
\end{proposition}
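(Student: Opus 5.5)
The plan is to reduce everything to the bias--variance identity already noted before the proposition: for any $f$ and any law $D$ on $(\Ev,s,\nuis,\xa)$ satisfying Assumption~\ref{as:a1},
\[
R_D(f)=\mathbb{E}_D\|\tilde f(\Ev,\nuis,s)-\mu(\Ev,s)\|^2+\mathbb{E}_D\|\xa-\mu(\Ev,s)\|^2 .
\]
Here $\tilde f$ is supplied by Assumption~\ref{as:a2}. To justify the identity I would expand the square, condition on $(\Ev,s,\nuis)$, and use Assumption~\ref{as:a1} to kill the cross term, since $\mathbb{E}[\xa-\mu\mid \Ev,s,\nuis]=0$. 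The second term is a noise constant that does not depend on $f$. It is the same for $f_{\mathrm{sc}}$ and $f^\star$ under a fixed $D$, so it cancels in both differences we need.

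For (i) I compute the excess term under $P_{\mathrm{tr}}$. The factored form of $f_{\mathrm{sc}}$ is $\tilde f_{\mathrm{sc}}(\Ev,\nuis,s)=\mu(\pi^{-1}(\nuis),s)$, and $\pi^{-1}$ is well defined on the range of $\pi$ because $\pi$ is injective. Since $\nuis=\pi(\Ev)$ holds $P_{\mathrm{tr}}$-a.s., we get $\pi^{-1}(\nuis)=\Ev$ a.s. Hence $\tilde f_{\mathrm{sc}}=\mu(\Ev,s)$ a.s., and the excess term is zero. The excess term for $f^\star$ is trivially zero. Both risks therefore equal the noise constant, which is a lower bound on $R_{\mathrm{tr}}$ for every $f$, so both are minimizers.

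For (ii) I apply the identity under $P_{\mathrm{te}}$. The excess for $f^\star$ is again zero, so
\[
R_{\mathrm{te}}(f_{\mathrm{sc}})-R_{\mathrm{te}}(f^\star)=\mathbb{E}_{\mathrm{te}}\|\mu(\pi^{-1}(\nuis),s)-\mu(\Ev,s)\|^2 .
\]
Setting $\Ev'=\pi^{-1}(\nuis)$ gives the displayed formula, and $\Ev'\perp(\Ev,s)$ follows from $\nuis\perp(\Ev,s)$ under $P_{\mathrm{te}}$. One point to check is that $\pi^{-1}(\nuis)$ is defined $P_{\mathrm{te}}$-a.s. I will read the test law as resampling $\nuis$ from its training marginal, which is supported on $\pi(\text{range of }\Ev)$, independently of $(\Ev,s)$. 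Then $\Ev'$ is an independent copy of $\Ev$ from the same marginal, and I will state this interpretation explicitly.

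The main obstacle is strict positivity. Conditional on $s$, the pair $(\Ev,\Ev')$ consists of independent draws from the $\Ev$-marginal; this assumes $\Ev\perp s$, and otherwise I would condition on $s$ and use the law of $\Ev'$. The term $\mathbb{E}\|\mu(\Ev',s)-\mu(\Ev,s)\|^2$ is then twice the variance of $\mu(\Ev,s)$ in $\Ev$, or in general at least the mass on which the two draws give different $\mu$ values. It is zero only if $\mu(\cdot,s)$ is a.s. constant on the support of $\Ev$ for a.e.\ $s$. So I will interpret ``non-constant'' as non-constant on a set of positive $P$-mass of $s$ and of $\Ev$-support. Under that reading, independence gives positive probability that $\mu(\Ev',s)\neq\mu(\Ev,s)$, and hence a strictly positive gap.
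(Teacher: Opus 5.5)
Your proposal is correct and follows the paper's proof essentially step for step. It uses the same risk-decomposition lemma, the a.s.\ identity $\pi^{-1}(\nuis)=\Ev$ under perfect confounding to get $f_{\mathrm{sc}}=f^\star$ $P_{\mathrm{tr}}$-a.s.\ for (i), and the substitution $\Ev'=\pi^{-1}(\nuis)$, independent of $(\Ev,s)$ with the marginal of $\Ev$, to get the strictly positive gap in (ii). Your two explicit caveats (that $P_{\mathrm{te}}$ must draw $\nuis$ from $\pi$'s range, and the role of $\Ev\perp s$ in evaluating $\mu(\Ev',s)$) spell out what the paper's proof assumes silently when it asserts that $\Ev'$ has the marginal law of $\Ev$.
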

Thus training risk does not identify $f^\star$: an ERM learner may return the shortcut, whose
OOD excess risk is exactly the event-explained audio variance. This is the phase transition of
Table~\ref{tab:onset} (any decorrelated mass breaks $\nuis=\pi(\Ev)$ and restores identifiability).

\begin{proposition}[Shared-latent / bottleneck insufficiency]\label{prop:bottleneck}
Consider generators that produce audio from a code $z=E(\xv)$ chosen to minimize
$R_{\mathrm{tr}}$ under an information (or dimension) constraint, and assume (as used in the
proof) that the event is audio-identifiable on a set of scenes of positive measure---formally
$\forall\,\Ev\neq\Ev'$, $P_s[\mu(\Ev,s)\neq\mu(\Ev',s)]>0$---so that an
audio-optimal code must determine the event. Under the hypothesis of
Proposition~\ref{prop:blind}, $E(\xv)=\nuis$ attains the optimal training risk at rate
$I(z;\xv)=H(\nuis)=H(\Ev)$, i.e.\ it is simultaneously risk-optimal and maximally compressive;
hence the bottleneck objective admits $E(\xv)=\nuis$ among its minimizers: it \emph{cannot guarantee} the event is preferred over the nuisance. This does not mean every shared-latent model must fail---only that sharing a latent is not \emph{sufficient} to rule the shortcut out. The same holds for a shared/private split with audio decoded from the shared code.
\end{proposition}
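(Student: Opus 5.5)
The plan is to reduce the bottleneck problem, under perfect confounding, to a statement about which $\sigma$-algebras of $\xv$ are audio-sufficient on the training law. Then we exhibit $E(\xv)=\nuis$ as a code that is both risk-optimal and of minimal rate.

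\textbf{Step 1: optimal training risk.} Any decoder $h$ acting on $z=E(\xv)$ has risk $R_{\mathrm{tr}}(h\circ E)=\mathbb{E}_{\mathrm{tr}}\|h(z,s)-\mu(\Ev,s)\|^2+\mathrm{const}$, using the decomposition stated before Proposition~\ref{prop:blind}. So the best achievable risk with code $z$ is attained by $h=\mathbb{E}[\mu(\Ev,s)\mid z,s]$. The unconstrained optimum equals the constant, and it is reached iff $\mu(\Ev,s)$ is $\sigma(z,s)$-measurable $P_{\mathrm{tr}}$-a.s.

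\textbf{Step 2: the nuisance code attains it.} Take $z=\nuis$, which is a function of $\xv$ by Assumption~\ref{as:a2}. Under $\nuis=\pi(\Ev)$ a.s. with $\pi$ injective, we have $\Ev=\pi^{-1}(z)$ a.s. Hence $h(z,s)=\mu(\pi^{-1}(z),s)$ is exactly the shortcut predictor $f_{\mathrm{sc}}$. By Proposition~\ref{prop:blind}(i) it achieves the optimal training risk.

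\textbf{Step 3: the nuisance code has minimal rate.} Since $z=\nuis$ is a deterministic function of $\xv$, $I(z;\xv)=H(\nuis)$. Injectivity of $\pi$ gives $H(\nuis)=H(\pi(\Ev))=H(\Ev)$.

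For the lower bound, use the audio-identifiability hypothesis. Suppose some risk-optimal code $z$ failed to determine $\Ev$. Then some value of $z$ would carry positive mass on two events $\Ev\neq\Ev'$. On the positive-measure set of scenes where $\mu(\Ev,s)\neq\mu(\Ev',s)$, the conditional mean $\mathbb{E}[\mu\mid z,s]$ cannot equal $\mu(\Ev,s)$ a.s. (here $s$ should be independent of or at least not determine $\Ev$ given $z$; I would state this as part of the law). That would give strictly positive excess risk. So every optimal code satisfies $H(\Ev\mid z)=0$. This yields $I(z;\xv)\ge I(z;\Ev)=H(\Ev)$, by data processing and because $\Ev$ is a function of $\xv$.

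\textbf{Step 4: conclusion.} $E=\nuis$ is therefore both risk-optimal and at the minimal rate $H(\Ev)$. It thus lies in the minimizer set of any Lagrangian or constrained objective of the form $R_{\mathrm{tr}}+\beta I(z;\xv)$, or $R_{\mathrm{tr}}$ subject to $I\le H(\Ev)$ or to a dimension bound. For the dimension version, use that $\nuis$ and $\Ev$ are in bijection, so they need equal dimension. The event code $E=\Ev$ sits at the same point, so the objective cannot prefer it. The shared/private case follows because audio decoded from $z_{\mathrm{s}}$ faces the same problem for $z_{\mathrm{s}}$, while $z_{\mathrm{v}}$ can absorb the rest of $\xv$ without affecting audio risk.

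\textbf{Main obstacle.} I expect Step 3's lower bound to be the delicate part: showing that a risk-optimal code must determine the event. This needs care with null sets and with the dependence between $s$ and $\Ev$. My first approach is a contradiction argument conditioning on an atom of $z$, assuming discrete $\Ev$.
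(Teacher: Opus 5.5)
Your proposal is correct and follows the paper's proof essentially step for step. The shared steps are: risk-optimality of $z=\nuis$ from the risk decomposition and Proposition~\ref{prop:blind}(i); the rate identity $I(\nuis;\xv)=H(\nuis)=H(\Ev)$ from injectivity of $\pi$; the lower bound $I(z;\xv)\ge H(\Ev)$ for every risk-optimal code from audio-identifiability; and the shared/private case, handled by letting $z_{\mathrm{v}}$ carry the reconstruction.

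Your parenthetical caveat in Step~3 is genuinely needed. The paper uses it only implicitly, simply asserting that $z$ ``reproduces $\mu(\cdot,s)$ across scenes''. If the scene law differs across events (in the extreme, if $s$ reveals $\Ev$), a constant code is already risk-optimal and the lower bound fails. One should therefore assume $s\perp\Ev$ under $P_{\mathrm{tr}}$, or at least that the scene laws given $\Ev$ and $\Ev'$ overlap on $\{s:\mu(\Ev,s)\neq\mu(\Ev',s)\}$.

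One claim in Step~4 should be dropped. $E(\xv)=\nuis$ is \emph{not} a minimizer of the Lagrangian $R_{\mathrm{tr}}+\beta I(z;\xv)$ for every $\beta$. Once $\beta H(\Ev)$ exceeds the excess training risk of the constant code (or of some coarser partition of the events), that lower-rate code wins. The paper only claims the constrained program with budget at least $H(\Ev)$, where feasibility plus Bayes-optimality of $z=\nuis$ suffices. For the Lagrangian, what survives is only that any code $c(\Ev)$ and its nuisance twin $c(\pi^{-1}(\nuis))$ coincide $P_{\mathrm{tr}}$-a.s., so the objective still cannot prefer the event over the nuisance.
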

This matches the collapse of the bottleneck and unsupervised shared/private models in
Table~\ref{tab:pixels}, and is consistent with the impossibility of unsupervised disentanglement
without inductive bias \citep{locatello2019challenging}.

\begin{theorem}[Counterfactual invariance identifies the causal predictor, under
known and faithful nuisance interventions (Assumptions~\ref{as:a1}--\ref{as:a2})]\label{thm:cf}
Let $\mathcal{F}_{\mathrm{inv}}=\{f:\ \tilde f(\Ev,\nuis,s)\text{ is independent of }\nuis\}$.
Then:
\begin{enumerate}
\item[(i)] \textbf{Identification.} $\arg\min_{f\in\mathcal{F}_{\mathrm{inv}}}R_{\mathrm{tr}}(f)=f^\star$,
and this predictor also minimizes $R_{\mathrm{te}}$ over all measurable $f$.
\item[(ii)] \textbf{Gap closure.} For every $f\in\mathcal{F}_{\mathrm{inv}}$,
$R_{\mathrm{tr}}(f)=R_{\mathrm{te}}(f)$ (the two laws share the $(\Ev,s)$ marginal), so
constrained ERM transfers without a nuisance-induced gap.
\item[(iii)] \textbf{Necessity.} The global $R_{\mathrm{te}}$-minimizer equals $f^\star\in
\mathcal{F}_{\mathrm{inv}}$ $P_{\mathrm{te}}$-a.s.; restricting to $\mathcal{F}_{\mathrm{inv}}$
loses no optimality.
\end{enumerate}
Moreover the counterfactual penalty
$\mathcal{L}_{\mathrm{cf}}(f)=\mathbb{E}_{\nuis'}\|f(g_{\mathrm{v}}(\Ev,\nuis),s)-f(g_{\mathrm{v}}(\Ev,\nuis'),s)\|^2$
satisfies $\mathcal{L}_{\mathrm{cf}}(f)=0\iff f\in\mathcal{F}_{\mathrm{inv}}$, so minimizing
$R_{\mathrm{tr}}(f)+\lambda\,\mathcal{L}_{\mathrm{cf}}(f)$ recovers $f^\star$ as the constraint
becomes active.
\end{theorem}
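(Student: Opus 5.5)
The whole argument rests on the risk decomposition already noted before Proposition~\ref{prop:blind}. Under Assumption~\ref{as:a2} we write any $f$ as $\tilde f(\Ev,\nuis,s)$. Under Assumption~\ref{as:a1} the conditional mean of $\xa$ given $(\Ev,s,\nuis)$ is $\mu(\Ev,s)$, so for any law $D$ on the latents the cross term vanishes and
\begin{equation*}
R_D(f)=\mathbb{E}_D\|\tilde f(\Ev,\nuis,s)-\mu(\Ev,s)\|^2+\mathbb{E}_D\|\xa-\mu(\Ev,s)\|^2 .
\end{equation*}
Here the second term does not depend on $f$. It also takes the same value under $P_{\mathrm{tr}}$ and $P_{\mathrm{te}}$, because the conditional law of $\xa$ given $(\Ev,s)$ is shared (audio mechanism unchanged) and the $(\Ev,s)$ marginals agree. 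I will state this "same audio mechanism" requirement explicitly as part of the definition of the counterfactual test law. I will carry out the three items in the order (ii), (i), (iii).

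\textbf{(ii) Gap closure.} For $f\in\mathcal{F}_{\mathrm{inv}}$ write $\tilde f(\Ev,\nuis,s)=h(\Ev,s)$. Then the first term is $\mathbb{E}\|h(\Ev,s)-\mu(\Ev,s)\|^2$, which depends only on the $(\Ev,s)$ marginal, so $R_{\mathrm{tr}}(f)=R_{\mathrm{te}}(f)$.

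\textbf{(i) Identification.} Since $f^\star\in\mathcal{F}_{\mathrm{inv}}$ makes the first term zero, it minimizes $R_{\mathrm{tr}}$ over $\mathcal{F}_{\mathrm{inv}}$. Any other invariant minimizer has $h=\mu$ $P(\Ev,s)$-a.s., so the argmin is $f^\star$ up to null sets; the equality "$=f^\star$" is to be read as a.s.\ equality. The same first term is also zero under $P_{\mathrm{te}}$, so $f^\star$ minimizes $R_{\mathrm{te}}$ over all measurable $f$.

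\textbf{(iii) Necessity.} Any global $R_{\mathrm{te}}$-minimizer $f$ must have $\tilde f=\mu$ $P_{\mathrm{te}}$-a.s. Because $P_{\mathrm{te}}$ is a product of the $(\Ev,s)$ marginal and the $\nuis$ marginal, this says $\tilde f(\Ev,\nuis,s)=\mu(\Ev,s)$ for almost every point of the product. Hence $f$ agrees a.s.\ with the invariant function $f^\star$.

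\textbf{Penalty characterization.} The penalty is $\mathcal{L}_{\mathrm{cf}}(f)=\mathbb{E}\|\tilde f(\Ev,\nuis,s)-\tilde f(\Ev,\nuis',s)\|^2$, with $\nuis'$ an independent draw from the intervention distribution $Q$. It vanishes if and only if $\tilde f(\Ev,\cdot,s)$ is a.s.\ constant in $\nuis$, provided $Q$ and the training conditional of $\nuis$ have support covering the nuisance range. This is exactly the "known and faithful interventions" hypothesis in the theorem title, so I will make it a standing condition. Under it, $\mathcal{L}_{\mathrm{cf}}=0\iff f\in\mathcal{F}_{\mathrm{inv}}$ (modulo null sets).

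\textbf{Penalty limit.} For the claim that minimizing $R_{\mathrm{tr}}+\lambda\mathcal{L}_{\mathrm{cf}}$ recovers $f^\star$, I would use a standard penalty-method argument. Let $f_\lambda$ be minimizers. Then $R_{\mathrm{tr}}(f_\lambda)+\lambda\mathcal{L}_{\mathrm{cf}}(f_\lambda)\le R_{\mathrm{tr}}(f^\star)$, which gives $\mathcal{L}_{\mathrm{cf}}(f_\lambda)\le R_{\mathrm{tr}}(f^\star)/\lambda\to0$ and $\limsup R_{\mathrm{tr}}(f_\lambda)\le R_{\mathrm{tr}}(f^\star)$.

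Passing to the limit is the main obstacle. One needs some compactness or closedness to conclude that limit points are invariant and therefore equal to $f^\star$. I would first try to avoid topology entirely with an $L^2$ projection argument. Let $\bar f_\lambda(\Ev,s):=\mathbb{E}_Q[\tilde f_\lambda(\Ev,\nuis',s)]$. Then $\mathcal{L}_{\mathrm{cf}}$ controls $\mathbb{E}\|\tilde f_\lambda-\bar f_\lambda\|^2$ (by a factor of $2$ when $\nuis$ itself has law $Q$; otherwise via a bounded density-ratio assumption). Combining this with the risk bound gives $\mathbb{E}\|\bar f_\lambda-\mu\|^2\to0$, so $f_\lambda\to f^\star$ in $L^2(P_{\mathrm{tr}})$. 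Thus "recovers $f^\star$" will be stated as this $L^2$ convergence, or as exact recovery when one takes the hard constraint $\lambda=\infty$.
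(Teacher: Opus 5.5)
Your treatment of (i)--(iii) and of the penalty characterization follows the paper's proof: the same risk decomposition, invariant $f$ written as $h(\Ev,s)$, and the product structure of $P_{\mathrm{te}}$ for necessity. Stating the shared audio mechanism and the intervention law $Q$ explicitly improves on what the paper leaves implicit.

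\textbf{The gap is in the penalty-limit step.} Your conclusion $f_\lambda\to f^\star$ in $L^2(P_{\mathrm{tr}})$ is not recovery of $f^\star$ in any useful sense. Under perfect confounding, $P_{\mathrm{tr}}$ is supported on $\nuis=\pi(\Ev)$, and there $f_{\mathrm{sc}}=f^\star$ $P_{\mathrm{tr}}$-a.s.\ (Proposition~\ref{prop:blind}(i)). So the shortcut predictor satisfies the same statement. In fact your statement already follows from the risk bound alone, and the projection $\bar f_\lambda$ adds nothing as you use it. The quantity that matters is $\mathbb{E}_{P(\Ev,s)\otimes Q}\|\tilde f_\lambda-\mu\|^2$, which is $L^2(P_{\mathrm{te}})$ when $Q$ is the test nuisance law. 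Conditioning on $(\Ev,\nuis,s)$ gives the exact identity
\[
\mathcal{L}_{\mathrm{cf}}(f)=\mathbb{E}_{P_{\mathrm{tr}}}\big\|\tilde f(\Ev,\nuis,s)-\bar f(\Ev,s)\big\|^2+\mathbb{E}_{P(\Ev,s)\otimes Q}\big\|\tilde f(\Ev,\nuis',s)-\bar f(\Ev,s)\big\|^2 .
\]
So no density-ratio assumption is needed, and it is the second term that controls the counterfactual (off-diagonal) values of $\tilde f$.

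\textbf{The ``main obstacle'' does not actually exist.} Your own decomposition shows that $f^\star$ minimizes $R_{\mathrm{tr}}$ over \emph{all} measurable $f$, and $\mathcal{L}_{\mathrm{cf}}(f^\star)=0$. Hence any minimizer $f_\lambda$ satisfies
\[
R_{\mathrm{tr}}(f^\star)+\lambda\,\mathcal{L}_{\mathrm{cf}}(f_\lambda)\le R_{\mathrm{tr}}(f_\lambda)+\lambda\,\mathcal{L}_{\mathrm{cf}}(f_\lambda)\le R_{\mathrm{tr}}(f^\star),
\]
so $\mathcal{L}_{\mathrm{cf}}(f_\lambda)=0$ and $R_{\mathrm{tr}}(f_\lambda)=R_{\mathrm{tr}}(f^\star)$ for \emph{every} $\lambda>0$. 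Your bound $\mathcal{L}_{\mathrm{cf}}(f_\lambda)\le R_{\mathrm{tr}}(f^\star)/\lambda$ discards exactly the lower bound $R_{\mathrm{tr}}(f_\lambda)\ge R_{\mathrm{tr}}(f^\star)$. With both facts, $\tilde f_\lambda(\Ev,\nuis',s)=\tilde f_\lambda(\Ev,\nuis,s)=\mu(\Ev,s)$ for $P_{\mathrm{tr}}\otimes Q$-a.e.\ point, so $\tilde f_\lambda=\mu$ $P(\Ev,s)\otimes Q$-a.e. This is exact recovery with no limit, compactness, or projection argument. The paper itself only asserts the ``active constraint'' limit, so here you can be sharper than it, but as written your version proves the wrong thing.

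One smaller point concerns the characterization: the support condition belongs on $Q$ alone. If you meant that the training conditional of $\nuis$ must also cover the nuisance range, that fails under perfect confounding, where it is the point mass at $\pi(\Ev)$. That is precisely the case the theorem is for.
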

Theorem~\ref{thm:cf} is the formal counterpart of Tables~\ref{tab:onset}--\ref{tab:pixels}: the
intervention restricts the hypothesis class to the causal path, which is simultaneously the
train- and test-optimal solution.

\paragraph{What the theorem does \emph{not} say.} The statement is
necessary-and-sufficient \emph{under the stated SCM and intervention assumptions}, and
those assumptions are strong: the nuisance is known and can be intervened on, the
intervention is faithful (it changes only the nuisance and leaves the event intact),
\emph{every} confounded nuisance is covered (Proposition~\ref{prop:partial}), the audio
is genuinely conditionally independent of the nuisance (Assumption~\ref{as:a1}), and the
hypothesis class and optimizer are rich enough to realize $f^\star$. Where any of these
fails---most importantly when the nuisance is unknown---the guarantee does not transfer,
and Appendix~\ref{sec:limitations} is explicit about this. Under these assumptions the
result is admittedly close to a definitional consequence---constraining a predictor to be
$\nuis$-invariant leaves only the $\nuis$-independent causal mean---and it is a
\emph{population}, exact-realizability statement: we do not give finite-$\lambda$,
finite-sample, or non-convex-optimization identification or generalization guarantees, and
we make no such claims. The value is in tying the failure and its cure to a single, checkable
condition, not in the depth of the proof. The theory is a statement about the idealized
causal structure, not a claim about deployed AV generators. Theorem~\ref{thm:cf} also concerns
conditional-\emph{mean} prediction under squared error; its extension to conditional generative
\emph{distributions} $p(\xa\mid\Ev,s)$---the setting of our flow-matching generator
(Table~\ref{tab:gen})---is supported \emph{empirically} rather than covered by the theorem,
although the same nuisance-invariance argument goes through for any strictly proper scoring rule
(Appendix~\ref{app:proofs}, Remark~\ref{rem:scoring}). Finally, the partial-augmentation behavior of
Section~\ref{sec:experiments} is explained by:

\begin{proposition}[Partial augmentation leaves a residual shortcut]\label{prop:partial}
Let $\nuis=(\nuis_1,\dots,\nuis_L)$ be conditionally independent given $\Ev$ and each perfectly
confounded, $\nuis_l=\pi_l(\Ev)$. Enforcing CF-invariance only on $S\subseteq\{1,\dots,L\}$
yields predictors that may still depend on $\nuis_{S^c}$; the resulting OOD excess risk equals the
event-explained audio variance recoverable from $\nuis_{S^c}$, and is zero iff $S$ contains every
confounded nuisance.
\end{proposition}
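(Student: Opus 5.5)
The argument reduces the multi-nuisance case to Proposition~\ref{prop:blind} applied to a relabelled nuisance. First I pin down the objects. By Assumption~\ref{as:a2}, every predictor factors as $f(\xv,s)=\tilde f(\Ev,\nuis_1,\dots,\nuis_L,s)$. Enforcing CF-invariance on $S$ means restricting to
\[
\mathcal{F}_S=\{f:\ \tilde f \text{ does not depend on } \nuis_S\}.
\]
By the same argument as the ``Moreover'' clause of Theorem~\ref{thm:cf}, this is exactly the zero set of the penalty $\mathcal{L}_{\mathrm{cf}}$ when the interventions resample only the coordinates in $S$. On $\mathcal{F}_S$ the risk decomposition $R_D(f)=\mathbb{E}_D\|\tilde f-\mu\|^2+\mathrm{const}$ from Section~\ref{sec:theory} still holds, by Assumption~\ref{as:a1}.

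\emph{Residual shortcut.} Suppose $S^c\neq\emptyset$ and pick $l\in S^c$. Since $\pi_l$ is injective, the predictor
\[
f_l(\xv,s):=\mu(\pi_l^{-1}(\nuis_l),s)
\]
belongs to $\mathcal{F}_S$ and equals $f^\star$ $P_{\mathrm{tr}}$-a.s. Hence both $f_l$ and $f^\star$ are training-risk minimizers within $\mathcal{F}_S$. This is exactly Proposition~\ref{prop:blind}(i), with $\nuis$ replaced by $\nuis_{S^c}$.

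\emph{OOD excess risk.} Under $P_{\mathrm{te}}$ the nuisance is independent of $(\Ev,s)$. For any training-risk minimizer $f\in\mathcal{F}_S$, the function $\tilde f$ must agree with $\mu$ on the confounded support. That support is parametrized by $\Ev$, so $f$ reads its event either from $\Ev$ or from some $\pi_l^{-1}(\nuis_l)$ with $l\in S^c$. A Pythagorean computation as in Proposition~\ref{prop:blind}(ii) then gives
\[
R_{\mathrm{te}}(f)-R_{\mathrm{te}}(f^\star)=\mathbb{E}\,\|\tilde f(\Ev,\nuis_{S^c},s)-\mu(\Ev,s)\|^2 .
\]
Its worst case over the minimizer set is $\mathbb{E}\|\mu(\Ev',s)-\mu(\Ev,s)\|^2$, where $\Ev'=\pi_l^{-1}(\nuis_l)$ is independent of $\Ev$. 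I read ``the event-explained audio variance recoverable from $\nuis_{S^c}$'' as this quantity. The conditional-independence assumption on the $\nuis_l$ given $\Ev$ is used here: under $P_{\mathrm{te}}$, each $\nuis_l$ then carries an independent copy of the event label.

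\emph{The iff.} If $S\supseteq\{1,\dots,L\}$, then $\mathcal{F}_S=\mathcal{F}_{\mathrm{inv}}$ and Theorem~\ref{thm:cf}(i)--(ii) give zero excess risk. Conversely, if $S^c\neq\emptyset$, the predictor $f_l$ above has strictly positive excess whenever $\mu(\cdot,s)$ is non-constant. Here ``zero'' is understood as zero for the worst-case ERM solution in $\mathcal{F}_S$.

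\emph{Main obstacle.} The hard step is making precise which quantity ``the'' excess risk refers to. Constrained ERM has a whole set of minimizers in $\mathcal{F}_S$: any mixture or arbitrary combination $\tilde f(\Ev,\nuis_{S^c},s)$ that agrees with $\mu$ on the diagonal. Their test risks differ, so the statement is only well defined as a supremum over minimizers, or for a specified shortcut predictor. I would first state this worst-case interpretation explicitly. I would then show that the supremum is attained at a single-coordinate shortcut $f_l$, and that it is bounded by the event-explained variance via $\|\tilde f-\mu\|^2\le$ the value attained by the worst such $f_l$.
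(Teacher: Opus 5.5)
Your core argument is the paper's proof. For $l\in S^c$, the single-coordinate shortcut $f_l=\mu(\pi_l^{-1}(\nuis_l),s)$ lies in $\mathcal{F}_S$ and agrees with $f^\star$ $P_{\mathrm{tr}}$-a.s., so it is a constrained training-risk minimizer. Proposition~\ref{prop:blind}(ii), applied to the reduced nuisance $\nuis_l$, gives its OOD excess $\mathbb{E}\|\mu(\Ev',s)-\mu(\Ev,s)\|^2>0$, and full coverage reduces to Theorem~\ref{thm:cf}. The paper reads the statement existentially and stops there: the constrained program \emph{admits} a shortcut solution with exactly that excess, while under full coverage every minimizer has zero excess.

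The step that would fail is your proposed resolution of the ambiguity. Three of your claims are false: that every constrained minimizer ``reads its event either from $\Ev$ or from some $\pi_l^{-1}(\nuis_l)$''; that the worst case over the minimizer set equals $\mathbb{E}\|\mu(\Ev',s)-\mu(\Ev,s)\|^2$; and that $\|\tilde f-\mu\|^2$ is bounded by the value of the worst $f_l$. Under perfect confounding, $P_{\mathrm{tr}}$ charges only the diagonal $\{\nuis_{S^c}=\pi_{S^c}(\Ev)\}$, so training risk says nothing about $\tilde f$ off it, whereas $P_{\mathrm{te}}$ gives the off-diagonal positive mass. For example, $\tilde f=\mu(\Ev,s)+c\,\mathbf{1}\{\nuis_l\neq\pi_l(\Ev)\}$ lies in $\mathcal{F}_S$ and is a constrained minimizer for every constant vector $c$, with OOD excess $\|c\|^2\Pr[\Ev'\neq\Ev]$; the supremum over minimizers is therefore $+\infty$. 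Even among $\mu$-valued predictors $\mu(\phi(\Ev,\nuis_{S^c},s),s)$, a bad choice of $\phi$ already beats $f_l$. Take $\phi=\Ev$ on the diagonal and, off it, the event whose mean is farthest from $\mu(\Ev,s)$. This gives strictly larger excess than every $f_l$ unless the other events' means are a.s.\ equidistant from $\mu(\Ev,s)$.

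So the worst-case reading salvages the ``zero iff'' clause: the worst case is $0$ under full coverage and at least the excess of $f_l$ otherwise. It does not salvage ``equals the event-explained variance''; state that equality for the shortcut predictor $f_l$, as the paper does.

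Relatedly, the conditional independence of the $\nuis_l$ given $\Ev$ is not what makes $\Ev'=\pi_l^{-1}(\nuis_l)$ an independent copy of $\Ev$. Under perfect confounding it holds trivially in training and says nothing about $P_{\mathrm{te}}$. What you actually use is $\nuis_l\perp(\Ev,s)$ under $P_{\mathrm{te}}$, with $\nuis_l$ keeping its training marginal, which Proposition~\ref{prop:blind}(ii) assumes implicitly.
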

\section{Experiments}
\label{sec:experiments}

We isolate the statistical question in settings where the causal structure, and
hence the ground-truth robustness, is known exactly. All models predict audio
$\xa$ from video $\xv$ and scene $s$; we report mean-squared audio error
in-distribution (IID, with the train-time confounding intact) and under the
counterfactual shift (OOD-cf, appearance decoupled from event). We compare a
\textbf{direct} model (audio reads the full video; the cross-attention
analogue), a \textbf{common-cause bottleneck}, an unsupervised
\textbf{shared/private disentanglement} (audio from the shared code only), and
\textbf{counterfactual consistency (CC+CF)}.

\subsection{The shortcut is real, and (in this setting) sharpest near perfect confounding}

Table~\ref{tab:onset} sweeps the training confounding strength
$\rho_{\mathrm{spur}}$ in a feature-vector SCM ($K{=}8$ events, $8$ nuisance
values). The direct model is essentially perfect IID at every
$\rho_{\mathrm{spur}}$, but its OOD-cf error rises and then jumps
catastrophically as confounding approaches~$1$: a few decorrelated training
examples are enough for it to learn the causal path, but under \emph{perfect}
confounding it has no incentive to, and fails by $\sim\!140\times$
(Figure~\ref{fig:onset}). The bottleneck
tracks the direct model and also collapses. Counterfactual consistency stays
flat and low across the entire sweep, at the cost of a small constant IID
overhead.

\begin{figure}[t]
\centering
\includegraphics[width=0.5\linewidth]{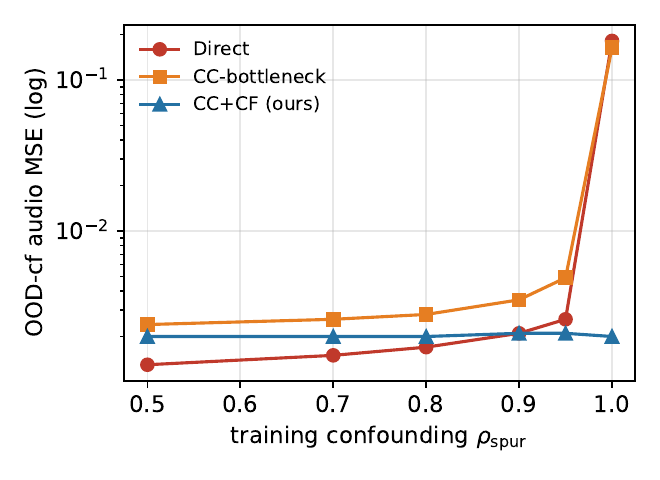}
\caption{Out-of-distribution audio error vs.\ training confounding strength
(log scale; same data as Table~\ref{tab:onset}). The shortcut failure of the
direct and bottleneck models is a sharp phase transition at
$\rho_{\mathrm{spur}}\!\to\!1$; counterfactual consistency is flat and robust.}
\label{fig:onset}
\end{figure}

\begin{table}[t]
\centering
\caption{Audio MSE vs.\ training confounding strength $\rho_{\mathrm{spur}}$
(feature-vector SCM). OOD-cf decouples appearance from event. The direct and
bottleneck models fail catastrophically as confounding approaches perfect in this setting;
counterfactual consistency is robust throughout.}
\label{tab:onset}
\begin{tabular}{lcccccc}
\toprule
& \multicolumn{2}{c}{Direct} & \multicolumn{2}{c}{CC-bottleneck} & \multicolumn{2}{c}{CC+CF (ours)} \\
\cmidrule(lr){2-3}\cmidrule(lr){4-5}\cmidrule(lr){6-7}
$\rho_{\mathrm{spur}}$ & IID & OOD-cf & IID & OOD-cf & IID & OOD-cf \\
\midrule
0.50 & 0.0012 & 0.0013 & 0.0023 & 0.0024 & 0.0019 & 0.0020 \\
0.70 & 0.0012 & 0.0015 & 0.0023 & 0.0026 & 0.0020 & 0.0020 \\
0.80 & 0.0012 & 0.0017 & 0.0022 & 0.0028 & 0.0020 & 0.0020 \\
0.90 & 0.0012 & 0.0021 & 0.0021 & 0.0035 & 0.0020 & 0.0021 \\
0.95 & 0.0011 & 0.0026 & 0.0020 & 0.0049 & 0.0020 & 0.0021 \\
1.00 & 0.0010 & \textbf{0.1816} & 0.0016 & \textbf{0.1637} & 0.0019 & \textbf{0.0020} \\
\bottomrule
\end{tabular}
\end{table}

\subsection{The failure is slice-local, and the fix has no collateral cost}

Real datasets confound only \emph{some} event--appearance pairs. Table~\ref{tab:partial}
confounds half of the event classes perfectly and leaves the rest decoupled. The
direct model fails only on the confounded slice ($40\times$ worse) and is
perfect on the clean slice; counterfactual consistency is low on both, fixing
the shortcut exactly where it exists without harming unconfounded events. The
objective thus costs little: it fixes the confounded slice at a small in-distribution price
(a modest rise in mel-MSE that barely dents event accuracy; Table~\ref{tab:real}), and does
no harm on unconfounded events.

\begin{table}[t]
\centering
\caption{Slice-local confounding: half the event classes are perfectly
confounded, half are clean. Audio MSE under the OOD-cf shift, split by slice.}
\label{tab:partial}
\begin{tabular}{lcc}
\toprule
Model & OOD-cf (confounded slice) & OOD-cf (clean slice) \\
\midrule
Direct        & 0.0488 & 0.0012 \\
CC+CF (ours)  & \textbf{0.0015} & 0.0016 \\
\bottomrule
\end{tabular}
\end{table}

\paragraph{On genuine pixels, and in temporal video.} The mechanism is not an artifact of abstract feature vectors. On procedurally rendered pixel video (event $=$ shape arrangement, nuisance $=$ colour) the direct model's OOD-cf error explodes by $100\times$ and a bottleneck and an unsupervised shared/private factorization fail just as hard, while a recolor-consistency intervention stays robust---architecture alone does not block the shortcut. The same holds when the causal cue is genuinely \emph{temporal}: with the event a cross-frame \emph{motion} and colour a per-frame shortcut, the direct model's motion-accuracy collapses from $1.00$ to $0.52$ while counterfactual consistency stays at $1.00$. Full tables and the video illustration are in Appendix~\ref{app:extra} (Tables~\ref{tab:pixels} and \ref{tab:video}, Figure~\ref{fig:vidstrip}).

\subsection{A faithful shared-prior model fails too}

Our claim that ``sharing a latent is not enough'' should be tested against the
mechanism actually used by recent joint AV models, which align both modalities to
a \emph{shared prior} \citep{liu2025javisdit}, rather than against a generic
bottleneck. We therefore add a faithful stand-in: a single shared code
$z_{\mathrm{s}}=E(\xv)$ from which the audio is generated \emph{and} the video is
reconstructed, so that both modalities align to it. Because $z_{\mathrm{s}}$ must
carry appearance in order to reconstruct the video, the audio branch inherits the
shortcut, and the model collapses under the counterfactual shift exactly like the
direct model (OOD-cf MSE $0.152$ vs.\ direct $0.182$ vs.\ ours $0.002$; full table
in Appendix~\ref{app:ablate}). This stand-in is a small encoder--decoder, not a
full-scale joint-AV diffusion architecture, so we do not claim shared-prior models
\emph{necessarily} fail; the point is the one our theory makes---aligning modalities to a
shared prior is \emph{not sufficient} to remove the shortcut (consistent with
Proposition~\ref{prop:bottleneck})---so the intervention is still needed.

\subsection{Real image pixels and a real audio representation}

We finally test whether the effect survives real image pixels, a real audio
representation, and a real pretrained model. We stress that this is \emph{not} a real
audio--video generation system: the audio is synthesized and the confound constructed,
and it is the closest offline proxy we can build, not a validation on a deployed AV
generator. We use the Colored-MNIST protocol \citep{arjovsky2019invariant}:
real MNIST digits (the event) are tinted with a color (the nuisance) that is
perfectly confounded with the digit class in training and decoupled at test
(Figure~\ref{fig:datamnist}); the
audio target is the $\log$-mel spectrogram of a synthesized instrument tone whose
pitch is set by the digit. The image encoder is a \emph{frozen, ImageNet-pretrained
ResNet-18} \citep{he2016deep}, and only a small audio head is trained.

\begin{figure}[t]
\centering
\includegraphics[width=0.8\linewidth]{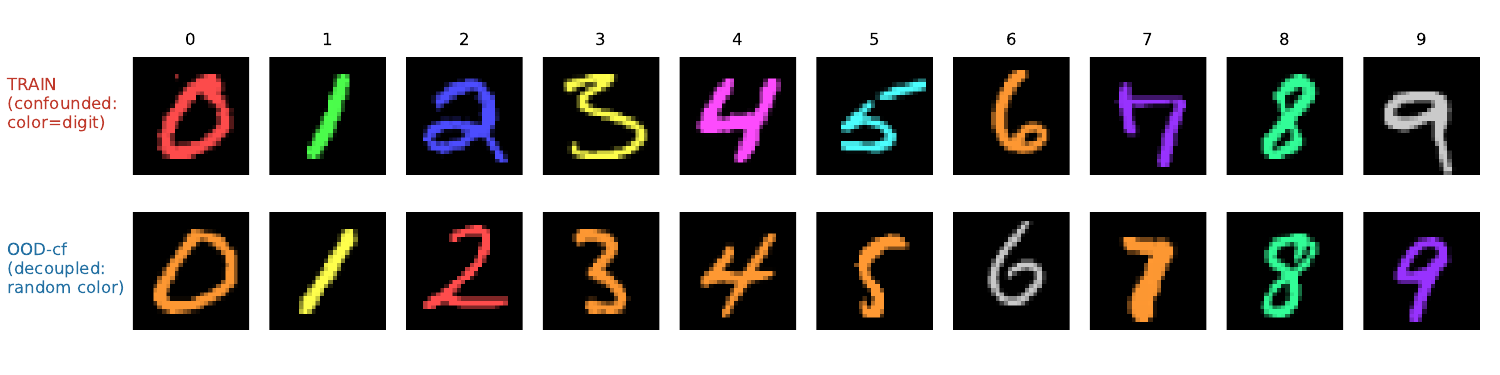}
\caption{The real-modality setting. In training every digit always wears its own
colour (perfect confounding); at test the colour is resampled independently of the
digit. A model that has learned ``colour $\Rightarrow$ sound'' will now render the
wrong tone.}
\label{fig:datamnist}
\end{figure}

Table~\ref{tab:real} shows the same picture on real image pixels: the direct model is perfect in-distribution but its
event-accuracy collapses to near chance ($0.160$, chance $=0.1$) under the
counterfactual color shift---it renders the wrong digit's tone---while
counterfactual consistency stays robust ($0.930$), a $\sim\!6\times$ improvement, at a
modest in-distribution cost. Figure~\ref{fig:specfail} makes this concrete: the
direct model's generated spectrogram matches the tone of the digit whose
\emph{colour} it was shown, not the digit itself. The phenomenon and its fix thus survive real image statistics, a real audio
representation, and a real pretrained backbone---though, we reiterate, not yet a real
AV generator.

\begin{table}[t]
\centering
\caption{Real modalities: Colored-MNIST (real digits, real spectrogram audio,
frozen pretrained ResNet-18), $\rho_{\mathrm{spur}}{=}1$. Mel-spectrogram MSE and
event-accuracy (chance $=0.1$; mean over $3$ seeds, std $\le0.03$ on MSE and
$\le0.008$ on accuracy). The color shortcut is fully exploited by the
direct model and blocked by counterfactual consistency.}
\label{tab:real}
\begin{tabular}{lcccc}
\toprule
& \multicolumn{2}{c}{IID} & \multicolumn{2}{c}{OOD-cf} \\
\cmidrule(lr){2-3}\cmidrule(lr){4-5}
Model & mel-MSE & event-acc & mel-MSE & event-acc \\
\midrule
Direct        & 0.001 & 1.00 & 2.359 & 0.160 \\
CC+CF (ours)  & 0.037 & 0.98 & \textbf{0.105} & \textbf{0.930} \\
\bottomrule
\end{tabular}
\end{table}

\subsection{Is the consistency term doing anything a simple augmentation cannot?}
\label{sec:baselines}

The sharpest test of our objective is supervised counterfactual \emph{augmentation}: if one can synthesize $\xv'=\mathrm{do}(\nuis:=\nuis')$ and knows it carries the same audio, one can simply train on it. We report this honestly. \textbf{With full labels the two are statistically indistinguishable} ($0.0012$ vs.\ $0.0011$ OOD-cf MSE, $5$ seeds), so we do \emph{not} claim the consistency loss as the contribution. What the comparison establishes is the paper's thesis: every method that \emph{intervenes on the nuisance} closes the gap ($\sim\!0.001$), while every method that merely reshapes the representation---the bottleneck, unsupervised shared/private factorization, the shared-prior model, and representation-level consistency ($0.0140$, $12\times$ worse)---does not. The two do diverge once labels are scarce: the consistency term is label-free and keeps improving from unlabeled video, a consistent ${\sim}1.5\times$ advantage as the labeled fraction falls to $0.02$. Full table, label-scarcity curve and discussion are in Appendix~\ref{app:extra} (Table~\ref{tab:baselines}, Figure~\ref{fig:labelscarce}).

\paragraph{When is the shortcut taken?} A shortcut is only attractive when it is \emph{easier} than the causal path, so the phenomenon is conditional. Sweeping the causal-cue strength shows the direct model's counterfactual penalty collapsing smoothly from $263\times$ to $2.7\times$ as the cue becomes easy to read, while the counterfactual objective stays robust and nearly free where it is unnecessary. A Moving-MNIST test with \emph{real} digit sprites confirms the falsifiable prediction: the shortcut appears only in the hard-motion regime (direct OOD accuracy $0.21$ vs.\ ours $0.995$) and there is nothing to fix when the motion is easy (both $\approx\!1.0$). Full sweep and table in Appendix~\ref{app:extra} (Figure~\ref{fig:cue}, Table~\ref{tab:mm}).

\subsection{A real video-to-audio generator is not invariant to sound-irrelevant edits}
\label{sec:realv2a}

Our controlled settings construct the confound to isolate the shortcut. On a \emph{real,
deployed} model we cannot construct one, so instead of isolating a shortcut we test a
necessary condition: is the model invariant to input edits that do not change what an event
sounds like? We run MMAudio \citep{cheng2025mmaudio}, a pretrained video-to-audio generator,
on $24$ real videos (UCF101 \citep{soomro2012ucf101} actions with characteristic sounds, plus
the model's own example clips). For each video and $3$ seeds we generate audio from the
original and from edited versions at a fixed generation seed (so any change is attributable
to the edit) and measure the shift of the generated audio in CLAP space \citep{wu2023large},
as a distance and as a zero-shot top-1 \emph{sound-class flip}. Two edits change only
\emph{appearance} while preserving motion (grayscale, colour-channel rotation); a $+2\%$
brightness edit is the robustness reference and time reversal is a causal (motion-changing)
reference. Crucially we add a \emph{generic out-of-distribution control}: additive Gaussian
pixel noise, which carries no event information and is not a colour change.

The model is clearly \emph{not} invariant to the appearance edits (Figure~\ref{fig:realv2a},
Table~\ref{tab:flip}): grayscale and colour rotation flip the top-1 sound class $62\%$ and
$54\%$ of the time versus $22\%$ for the minimal edit (non-overlapping CIs), shifting the
CLAP embedding $\sim\!5$--$6\times$ as far. Recolouring or graying a video substantially
changes the sound this real model generates. \textbf{But the appearance edits are not special
once perturbation magnitude is controlled.} Calibrating a semantically-empty Gaussian-noise
control to the same per-video PSNR as each appearance edit (grayscale $\approx19.9$\,dB,
$\sigma{=}0.10$; colour $\approx15.6$\,dB, $\sigma{=}0.17$) and re-measuring, the matched
noise flips the sound class as often or more: grayscale $62\%$ vs.\ $60\%$ (overlapping CIs),
colour $54\%$ vs.\ $\mathbf{65\%}$. The model's sensitivity tracks perturbation size, not
appearance content. We therefore read the real-model result as \emph{non-invariance} to
sound-irrelevant edits (a necessary symptom consistent with shortcutting) and \emph{do not}
claim it isolates a colour/appearance shortcut; that isolation needs the controlled confound
available only in the synthetic settings above.

\begin{table}[t]
\centering
\caption{Sensitivity of a real V2A generator (MMAudio) to input edits, over $24$ real videos
$\times 3$ seeds ($95\%$ bootstrap CIs): sound-class flip rate and mean CLAP distance of the
generated audio. Appearance edits change the sound far more than the minimal edit, but a
\emph{magnitude-matched} Gaussian-noise control (same per-video PSNR) moves it at least as
much: the effect is broad input-brittleness, not an isolated colour shortcut.}
\label{tab:flip}
\begin{tabular}{lcc}
\toprule
Input edit & sound-class flip rate & mean CLAP distance \\
\midrule
tiny ($+2\%$ brightness), robust ref. & 22\% \,[13,32] & 0.060\,[0.04,0.08] \\
time reversal (causal)   & 44\% \,[33,57] & 0.276\,[0.24,0.32] \\
\midrule
colour rotation (nuisance) & 54\% \,[43,65] & 0.357\,[0.32,0.40] \\
grayscale (nuisance)     & 62\% \,[51,74] & 0.342\,[0.30,0.38] \\
\midrule
Gaussian noise, generic ($\sigma{=}0.15$)   & 60\% \,[47,71] & 0.452\,[0.42,0.49] \\
Gaussian noise, magnitude-matched to grayscale & 60\% \,[49,71] & 0.400\,[0.36,0.44] \\
Gaussian noise, magnitude-matched to colour & \textbf{65\%}\,[54,76] & \textbf{0.472}\,[0.43,0.51] \\
\midrule
different video (ceiling) & --- & 0.673 \\
\bottomrule
\end{tabular}
\end{table}

In short, the real-model experiment confirms a necessary symptom---a deployed V2A generator
is far from invariant to edits that leave the event unchanged---but, honestly, cannot
attribute it specifically to a colour/appearance shortcut, because a magnitude-matched,
semantically-empty noise perturbation is just as disruptive. The clean isolation of the
shortcut, and of its fix, lives in the controlled studies; the real model shows the symptom
is present at scale.

\paragraph{Does the intervention transfer to the real model?} Finally, we ask whether the
counterfactual objective can \emph{reduce} the observed nuisance sensitivity in this real
model. Without any real audio, we fine-tune MMAudio's velocity network for $1500$ steps on
$18$ held-out videos with a recolor-consistency term
($\|v_\theta(x_t,t\,|\,\text{recolor})-v_{\text{frozen}}(x_t,t\,|\,\text{video})\|^2$)
anchored by distillation to the frozen model on the original video, with $x_t$ sampled along
the frozen model's own generation path. To gauge stability we fine-tune $3$ times with
independent seeds and evaluate every model, and the frozen baseline, on a common held-out
set ($12$ videos $\times 2$ seeds) so ``before'' and ``after'' are strictly matched. The
intervention \emph{does} reduce the grayscale sensitivity it targets, reproducibly and
without collapsing the model: the grayscale flip rate falls from $70.8\%$ (frozen) to
$54.2\%\!\pm\!3.4$, while the different-video ceiling is preserved ($0.67$ to $0.69$), so the
model still produces distinct, video-dependent audio. \textbf{But the effect is not the
surgical appearance-only correction one might hope for.} On the same set the colour-rotation
flip is essentially unchanged ($62.5\%$ to $63.9\%\!\pm\!2.0$), while the generic
Gaussian-noise sensitivity ($83.3\%$ to $70.8\%\!\pm\!0.0$) and even the \emph{causal}
time-reversal sensitivity ($58.3\%$ to $43.1\%\!\pm\!1.9$) fall by a similar margin: the
lightweight fine-tune broadly \emph{dampens} sensitivity to input perturbations rather than
selectively erasing an appearance shortcut, as expected if the deployed model's
non-invariance is broad rather than colour-specific. We therefore claim only what the data
support: the objective transfers far enough to \emph{measurably and stably reduce} the
targeted appearance sensitivity while keeping the model functional, but it neither eliminates
that sensitivity nor isolates it from the model's general input-brittleness. Cleanly
de-biasing a large pretrained generator (separating appearance from causal sensitivity, via
real-audio supervision or training from scratch) is left to future work.

\subsection{The shortcut, and its fix, appear in a generative model}

To confirm the phenomenon is not an artifact of audio \emph{regression}, we repeat it with a
genuine conditional \emph{generator}: a flow-matching model \citep{lipman2023flow} that
samples audio $\xa$ from video features, trained directly or with the
counterfactual-consistency term. Alongside sample error, Table~\ref{tab:gen} reports the
\emph{event-accuracy} of the sampled audio (does it match the correct event's spectrum). The
direct generator is perfect in-distribution but its event-accuracy collapses to chance under
the counterfactual shift ($0.13$, chance $0.125$), generating the wrong event's sound (the
one implied by the shortcut appearance), whereas counterfactual consistency keeps it at
$1.0$. A linear probe explains the gap: the nuisance material is fully decodable from the
direct and bottleneck representations (accuracy $1.0$) but close to chance ($0.24$) under
counterfactual consistency, with event-decodability still $1.0$, so the representation is
reorganized around the cause. Finally, with several individually-sufficient nuisances
robustness is all-or-nothing: any single un-augmented confounder preserves a complete
shortcut (Proposition~\ref{prop:partial}). The probe figure and the partial-augmentation
curve are in Appendix~\ref{app:extra} (Figure~\ref{fig:probe}).

\begin{table}[t]
\centering
\caption{Conditional \emph{generation} (flow matching), $\rho_{\mathrm{spur}}{=}1$.
We report generated-sample audio MSE and the event-accuracy of the sampled audio
(mean$\pm$std over $3$ seeds). The direct generator synthesizes the
shortcut-implied wrong event's sound OOD.}
\label{tab:gen}
\begin{tabular}{lcccc}
\toprule
& \multicolumn{2}{c}{IID} & \multicolumn{2}{c}{OOD-cf} \\
\cmidrule(lr){2-3}\cmidrule(lr){4-5}
Model & MSE & event-acc & MSE & event-acc \\
\midrule
Direct generator      & 0.0016 & 1.00 & $0.171{\pm}0.002$ & $0.130{\pm}0.000$ \\
CC+CF generator (ours)& 0.0016 & 1.00 & $\mathbf{0.0016{\pm}0.000}$ & $\mathbf{1.00{\pm}0.00}$ \\
\bottomrule
\end{tabular}
\end{table}

\section{Conclusion}
We showed that joint audio--video generators learn a visual shortcut, that
sharing a latent does not remove it, and that---under the stated SCM assumptions---
\emph{counterfactual invariance} is necessary and sufficient to identify the causal
predictor, an intervention on the nuisance rather than a change of architecture. This
is a population, exact-realizability characterization; empirically the resulting
consistency objective blocks the shortcut in controlled settings, including on pixel
video and in a conditional generator. The contribution reframes robust AV generation as a
question of intervention, and isolates the unknown-nuisance regime as the key
remaining challenge.

\newpage
\section*{Reproducibility Statement}
All theoretical claims are stated with their assumptions in Section~\ref{sec:theory} and
proved in full in Appendix~\ref{app:proofs}. The structural causal models, architectures,
training objectives, and hyperparameters for every experiment are specified in
Appendix~\ref{app:details}, which also documents the seed protocol and how means and standard
deviations are reported (multi-seed tables and the matched-eval protocol for the real-model
fine-tune). All datasets used are either procedurally generated from the described SCMs or
public (MNIST; UCF101 \citep{soomro2012ucf101}); the real video-to-audio experiments use the
publicly released MMAudio \citep{cheng2025mmaudio} and CLAP \citep{wu2023large} checkpoints.
Anonymized code to reproduce all controlled experiments and figures is provided with the
submission.

\bibliography{iclr2026_conference}
\bibliographystyle{iclr2027_conference}

\appendix

\section{Positioning and Scope}
\label{app:positioning}

\paragraph{What is and is not new here.} Consistency regularization under a known
transformation is a standard tool, and we do not claim it as a novel mechanism; the
individual ingredients---counterfactual invariance \citep{veitch2021counterfactual},
invariant risk minimization \citep{arjovsky2019invariant}, causal latent-variable models
\citep{yang2021causalvae}---all predate us. Our contribution is (i) to identify and
formalize the visual shortcut as the failure mode of joint AV generation, (ii) to show,
in controlled settings where the ground truth is known exactly, that the field's current
remedy---aligning modalities to a shared latent---provably and empirically does
\emph{not} address it, (iii) to establish precisely when the intervention is required
(and, equally, when it is not), and (iv) to isolate the unknown-nuisance regime as the
concrete obstacle to deploying it. The shift is one of \emph{burden}: from
representation design to intervention.

\paragraph{Scope.} This is a controlled causal study. Our most realistic settings use
real image pixels, a real audio representation, and a pretrained backbone, but the
audio is synthesized and the confounding is constructed---because measuring a shortcut
counterfactually \emph{requires} knowing the confound. We therefore establish a
mechanism and its conditions, not a validated fix for any deployed AV system;
transferring the objective to recorded in-the-wild AV corpora and full-scale joint AV
generators is the natural next step, discussed in Section~\ref{sec:limitations}.

\section{Limitations and the Unknown-Nuisance Problem}
\label{sec:limitations}

\textbf{The main limitation concerns breadth of real-model evidence.} On a real, pretrained
video-to-audio generator we can only establish a \emph{necessary symptom}, not an isolated
shortcut: the model is far from invariant to sound-irrelevant input edits
(Section~\ref{sec:realv2a}), which is consistent with shortcutting but---because a generic
Gaussian-noise control moves the generated sound at least as much as the appearance edits---
cannot be attributed specifically to a colour/appearance shortcut. Isolating a specific shortcut
requires the constructed confound we have only in the synthetic settings, where the
\emph{controlled} evidence necessarily uses synthesized audio (measuring a shortcut
counterfactually requires knowing the confound). Our recolor-consistency fine-tune of the real
model is likewise honest about what it shows: across $3$ seeds on a matched evaluation set it
\emph{stably reduces the targeted grayscale sensitivity} ($70.8\%\!\to\!54.2\%$) without
collapsing the model (the different-video semantic ceiling is preserved), but it does not
isolate appearance---colour flip is essentially unchanged and the generic-noise and causal
time-reversal sensitivities fall by a similar margin, so the intervention broadly \emph{dampens}
input sensitivity rather than surgically removing an appearance shortcut. It thus neither
demonstrates nor removes a real-model shortcut; that, and a survey beyond one model and global
colour edits, remain open.
Closing this gap---fully removing the shortcut inside a deployed joint-AV generator (with
real-audio supervision or training from scratch), extending the intervention across models
and to material/background edits, and building an appearance-preserving
event-counterfactual benchmark on recorded AV keyed to physical-commonsense benchmarks
\citep{cui2026joint,li2026benchmarking}---is the most important direction this work leaves
open. Three scope conditions are intrinsic
rather than incidental. First, the intervention is decisive mainly under
near-perfect (possibly slice-local) confounding; establishing that real AV
corpora contain such slices is an empirical prerequisite, for which existing
benchmarks provide encouraging but indirect evidence. Second, the shortcut is taken
only when the causal cue is hard to read relative to the nuisance
(Section~\ref{sec:cue}): where the event is visually obvious---as with a large,
unambiguous motion---a direct model is already robust and our objective neither helps
nor hurts. Which real AV events are visually subtle enough to be shortcut is an open
empirical question; the near-zero cost of the objective, however, makes leaving it on
the safe default. Third, and most
important, the practical recipe assumes \emph{every} confounding nuisance is
\emph{identifiable and augmentable}: Proposition~\ref{prop:partial} and its
experiment show robustness is all-or-nothing, so missing even one confounder
leaves a complete shortcut, and our unsupervised shared/private baseline shows
that representation learning alone does not recover robustness. Discovering
unknown nuisances and intervening on all of them---rather than assuming they are
given---is, in our view, the central open problem for causally robust
audio--video generation, and the most promising direction this work points to.

\section{Extended Related Work}
\label{app:relwork}

Our contribution sits at the intersection of several literatures; we expand on each here and
mark where we differ.

\paragraph{Joint audio--video and video-to-audio generation.} Recent systems couple the two
modalities either by direct cross-modal attention \citep{ruan2023mm,xing2024seeing} or, more
recently, by aligning both to a shared spatio-temporal prior
\citep{liu2025javisdit,hacohen2026ltx}; video-to-audio models such as
Diff-Foley \citep{luo2023difffoley} and MMAudio \citep{cheng2025mmaudio} map video (and text)
to audio through a latent diffusion / flow-matching backbone. Physical-commonsense benchmarks
report that current joint and V2A models are sensitive to exactly the visual factors they
should ignore \citep{cui2026joint,li2026benchmarking}, providing external evidence that the
shortcut we isolate occurs in practice. Unlike these works, which propose architectures or
benchmarks, we give a causal account of \emph{why} the coupling produces a shortcut and what
provably removes it.

\paragraph{Shortcut learning and spurious correlation.} Shortcut learning is a general failure
of deep models \citep{geirhos2020shortcut}; remedies include reweighting worst-case groups
\citep{sagawa2020group} and de-biasing from a deliberately biased reference model
\citep{nam2020learning}. These operate on classification with known or inferred group labels;
we instead exploit the generative AV structure, where a \emph{feasible} nuisance intervention
(recolouring, restyling) replaces the need for group annotations.

\paragraph{Invariance and domain generalization.} IRM \citep{arjovsky2019invariant}, V-REx
\citep{krueger2021rex} and GroupDRO \citep{sagawa2020group} seek predictors invariant across
training environments; DomainBed \citep{gulrajani2021search} shows their gains over ERM are
often fragile, and adversarial feature alignment \citep{ganin2016domain} is unstable. As our
Appendix~\ref{app:review} comparison makes explicit, these methods trade one requirement---many
decorrelating environments---for the one we use---a single known, faithful counterfactual---and
the latter is available precisely in the AV setting we study.

\paragraph{Counterfactual invariance and causal representation learning.} Counterfactual
invariance \citep{veitch2021counterfactual} and neural structural causal models
\citep{yang2021causalvae} provide the conceptual backbone; the impossibility of unsupervised
disentanglement without inductive bias \citep{locatello2019challenging} explains why a
shared/private factorization cannot, on its own, separate event from appearance. Our
contribution is to instantiate counterfactual invariance for the specific audio--video
common-cause SCM, to prove that the shared-latent remedy is insufficient there, and to
identify the unknown-nuisance regime as the barrier to deployment.

\section{Full Proofs}
\label{app:proofs}

Throughout, $(\Ev,s,\nuis)$ are the latent factors, $\xv=g_{\mathrm{v}}(\Ev,\nuis)$,
$\mu(\Ev,s)=\mathbb{E}[\xa\mid\Ev,s,\nuis]$ (Assumption~\ref{as:a1}), and by
Assumption~\ref{as:a2} every measurable predictor factors as
$f(\xv,s)=\tilde f(\Ev,\nuis,s)$. We write $R_D(f)=\mathbb{E}_D\|f(\xv,s)-\xa\|^2$.
$P_{\mathrm{tr}}$ denotes the confounded training law and $P_{\mathrm{te}}$ the
counterfactual test law, in which $\nuis\perp(\Ev,s)$ and the $(\Ev,s)$ marginal is
unchanged. $\Ev$ is discrete with finite support.

\begin{lemma}[Risk decomposition]\label{lem:decomp}
For any measurable $f$ and any law $D$ over $(\Ev,s,\nuis)$,
\[
R_D(f)=\mathbb{E}_D\big\|\tilde f(\Ev,\nuis,s)-\mu(\Ev,s)\big\|^2
+\mathbb{E}_D\big[\operatorname{tr}\operatorname{Cov}(\xa\mid \Ev,s)\big],
\]
and the second term does not depend on $f$.
\end{lemma}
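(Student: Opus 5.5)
The plan is to run the standard bias--variance (Pythagorean) argument for squared loss, conditioning on the latent factors $(\Ev,s,\nuis)$. By Assumption~\ref{as:a2} the predictor $f(\xv,s)=\tilde f(\Ev,\nuis,s)$ is a measurable function of the latents. So conditionally on $(\Ev,s,\nuis)$ it is a constant vector, and we can insert and subtract the conditional mean of the audio. Concretely, we write
\[
f(\xv,s)-\xa=\big(\tilde f(\Ev,\nuis,s)-\mu(\Ev,s)\big)+\big(\mu(\Ev,s)-\xa\big),
\]
expand the squared norm, and take $\mathbb{E}_D$ via the tower property, first conditioning on $(\Ev,s,\nuis)$.

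There are three terms to handle.

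\emph{Cross term.} It equals $2\,\mathbb{E}_D\big[(\tilde f-\mu)^\top\,\mathbb{E}[\mu(\Ev,s)-\xa\mid \Ev,s,\nuis]\big]$. The inner expectation vanishes because, by Assumption~\ref{as:a1}, $\mathbb{E}[\xa\mid\Ev,s,\nuis]=\mu(\Ev,s)$.

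\emph{First square.} This is exactly $\mathbb{E}_D\|\tilde f(\Ev,\nuis,s)-\mu(\Ev,s)\|^2$, the first term of the claim.

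\emph{Second square.} Its conditional expectation given $(\Ev,s,\nuis)$ is $\operatorname{tr}\operatorname{Cov}(\xa\mid\Ev,s,\nuis)$. We need to replace this by $\operatorname{tr}\operatorname{Cov}(\xa\mid\Ev,s)$.

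That replacement is the one step needing care, and I expect it to be the main obstacle. Assumption~\ref{as:a1} as stated only guarantees mean independence, but it also offers the equivalent form $\xa\perp\nuis\mid(\Ev,s)$. I will use that full conditional independence, so the entire conditional law of $\xa$ given $(\Ev,s,\nuis)$ coincides with its law given $(\Ev,s)$, and hence so does the conditional covariance.

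A related subtlety is that the audio mechanism $P(\xa\mid\Ev,s)$ must be the same under every law $D$ considered. Only the distribution of the latents $(\Ev,s,\nuis)$ changes between $P_{\mathrm{tr}}$ and $P_{\mathrm{te}}$; the structural equation $\xa=g_{\mathrm a}(\Ev,s)+\epsilon_{\mathrm a}$ is shared. I will state explicitly that this is the reading of ``any law $D$ over $(\Ev,s,\nuis)$.'' Under it, $\operatorname{tr}\operatorname{Cov}(\xa\mid\Ev,s)$ is a fixed function of $(\Ev,s)$ that is determined by the mechanism and not by $f$.

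To finish, we take the outer expectation over $D$. This yields $\mathbb{E}_D[\operatorname{tr}\operatorname{Cov}(\xa\mid\Ev,s)]$, which involves no $f$, and establishes the lemma. Integrability is assumed throughout, i.e.\ $\mathbb{E}\|\xa\|^2<\infty$ and $\mathbb{E}_D\|\tilde f\|^2<\infty$; otherwise both sides are $+\infty$, and the identity still holds.

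Finally, since $\Ev$ has finite support, conditioning on it needs no regular-conditional-probability technicalities beyond those for $s$ and $\nuis$. For those I would simply invoke the existence of regular conditional distributions on standard Borel spaces.
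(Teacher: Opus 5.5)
Your proposal is correct and follows the paper's proof essentially step for step. It uses the same split of the squared error around $\mu(\Ev,s)$, kills the cross term by conditioning on $(\Ev,\nuis,s)$ and invoking Assumption~\ref{as:a1}, and identifies the residual as $\mathbb{E}_D[\operatorname{tr}\operatorname{Cov}(\xa\mid\Ev,s)]$. The step you flag as the main obstacle is easier than you expect: condition $\|\mu-\xa\|^2$ directly on $(\Ev,s)$ and use $\mathbb{E}[\xa\mid\Ev,s]=\mu(\Ev,s)$ (tower property), and you get the covariance term from mean independence alone, so the full conditional-independence form is convenient but not needed.
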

\begin{proof}
Write $\|\tilde f-\xa\|^2=\|\tilde f-\mu\|^2+\|\mu-\xa\|^2+2\langle \tilde f-\mu,\ \mu-\xa\rangle$.
Conditioning the cross term on $(\Ev,\nuis,s)$ and using
$\mathbb{E}[\xa\mid \Ev,\nuis,s]=\mu(\Ev,s)$ (Assumption~\ref{as:a1}) together with the
$(\Ev,\nuis,s)$-measurability of $\tilde f-\mu$ gives
$\mathbb{E}\langle \tilde f-\mu,\mu-\xa\rangle
=\mathbb{E}\langle \tilde f-\mu,\ \mu-\mathbb{E}[\xa\mid \Ev,\nuis,s]\rangle=0$.
The remaining term $\mathbb{E}\|\mu-\xa\|^2$ equals
$\mathbb{E}[\operatorname{tr}\operatorname{Cov}(\xa\mid \Ev,s)]$ by
Assumption~\ref{as:a1} and is free of $f$.
\end{proof}

Lemma~\ref{lem:decomp} means minimizing risk is exactly matching $\mu$, and that the
causal predictor $f^\star=\mu(\Ev,s)$ is the Bayes predictor under \emph{any} law.

\begin{proof}[Proof of Proposition~\ref{prop:blind}]
\textbf{(i)} Under perfect confounding $\nuis=\pi(\Ev)$ $P_{\mathrm{tr}}$-a.s.\ with $\pi$
injective, we have $\pi^{-1}(\nuis)=\Ev$ $P_{\mathrm{tr}}$-a.s., hence
$f_{\mathrm{sc}}=\mu(\pi^{-1}(\nuis),s)=\mu(\Ev,s)=f^\star$ $P_{\mathrm{tr}}$-a.s.
Two predictors that agree a.s.\ have equal risk, so
$R_{\mathrm{tr}}(f_{\mathrm{sc}})=R_{\mathrm{tr}}(f^\star)$; by
Lemma~\ref{lem:decomp} this is the Bayes risk, so both are training-risk minimizers.

\textbf{(ii)} Under $P_{\mathrm{te}}$ set $\Ev':=\pi^{-1}(\nuis)$. Since
$\nuis\perp(\Ev,s)$ and $\pi$ is a bijection onto its range, $\Ev'\perp(\Ev,s)$ and
$\Ev'$ has the marginal law of $\Ev$. Applying Lemma~\ref{lem:decomp} to both
predictors under $P_{\mathrm{te}}$, the $f$-free terms cancel and
\[
R_{\mathrm{te}}(f_{\mathrm{sc}})-R_{\mathrm{te}}(f^\star)
=\mathbb{E}\big\|\mu(\Ev',s)-\mu(\Ev,s)\big\|^2-0
=\mathbb{E}\big\|\mu(\Ev',s)-\mu(\Ev,s)\big\|^2\ \ge 0 .
\]
If $\mu(\cdot,s)$ is non-constant on a set of positive measure, then since
$\Ev'\perp\Ev$ with the same marginal, $\Pr[\mu(\Ev',s)\neq\mu(\Ev,s)]>0$ and the gap
is strictly positive.
\end{proof}

\begin{proof}[Proof of Proposition~\ref{prop:bottleneck}]
Assume perfect confounding, so $\sigma(\nuis)=\sigma(\Ev)$ under $P_{\mathrm{tr}}$.

\emph{$z=\nuis$ is risk-optimal.} Because $\nuis$ determines $\Ev$, the predictor
$(\nuis,s)\mapsto\mu(\pi^{-1}(\nuis),s)$ equals $\mu(\Ev,s)$ $P_{\mathrm{tr}}$-a.s., so by
Lemma~\ref{lem:decomp} an encoder $E(\xv)=\nuis$ with an appropriate audio decoder
attains the Bayes training risk.

\emph{$z=\nuis$ is rate-minimal among risk-optimal encoders.} Suppose an encoder
$z=E(\xv)$ admits a decoder attaining the Bayes training risk; by
Lemma~\ref{lem:decomp} its decoder output equals $\mu(\Ev,s)$ a.s. If the event is
audio-identifiable on a positive-measure set of scenes---$\forall\,\Ev\neq\Ev'$,
$P_s[\mu(\Ev,s)\neq\mu(\Ev',s)]>0$---then the map $s\mapsto\mu(\Ev,s)$ pins down $\Ev$, so
$z$ (which reproduces $\mu(\cdot,s)$ across scenes) determines $\Ev$ a.s., giving
$I(z;\xv)\ge H(\Ev)$. Meanwhile $I(\nuis;\xv)=H(\nuis)=H(\Ev)$ because $\pi$ is a
bijection. Hence $z=\nuis$ attains the minimum achievable rate among risk-optimal
encoders, i.e.\ the pair $(\text{risk},\text{rate})$ at $z=\nuis$ is Pareto-optimal.

\emph{Conclusion.} For any budget $\beta\ge H(\Ev)$, the program
$\min_E R_{\mathrm{tr}}$ s.t.\ $I(z;\xv)\le\beta$ admits $E(\xv)=\nuis$ as a
minimizer; no such objective can prefer $z=\Ev$ over $z=\nuis$, since the two are
indistinguishable in both risk and rate. The predictor induced by $z=\nuis$ is
exactly $f_{\mathrm{sc}}$, whose OOD excess risk is strictly positive by
Proposition~\ref{prop:blind}. Finally, for a shared/private split
$\xv\mapsto(z_{\mathrm{s}},z_{\mathrm{v}})$ with audio decoded from $z_{\mathrm{s}}$
only, the audio term of the objective is exactly the program above, and the video
reconstruction term can be satisfied by $z_{\mathrm{v}}$; hence
$z_{\mathrm{s}}=\nuis$ again lies among the minimizers.
\end{proof}

\begin{proof}[Proof of Theorem~\ref{thm:cf}]
\textbf{(i) Identification.} Let $f\in\mathcal{F}_{\mathrm{inv}}$. By definition
$\tilde f(\Ev,\nuis,s)$ does not depend on $\nuis$, so $f=h(\Ev,s)$ for some
measurable $h$. By Lemma~\ref{lem:decomp},
$R_{\mathrm{tr}}(f)=\mathbb{E}_{\mathrm{tr}}\|h(\Ev,s)-\mu(\Ev,s)\|^2+\mathrm{const}$,
which is minimized if and only if $h=\mu$ $P_{\mathrm{tr}}$-a.s.; hence
$\arg\min_{\mathcal{F}_{\mathrm{inv}}}R_{\mathrm{tr}}=f^\star$. For the second claim,
Lemma~\ref{lem:decomp} applied under $P_{\mathrm{te}}$ gives, for \emph{any}
measurable $g$, $R_{\mathrm{te}}(g)\ge\mathrm{const}$ with equality iff
$\tilde g=\mu$ a.s.; $f^\star$ attains it, so $f^\star$ is the global
$R_{\mathrm{te}}$-minimizer.

\textbf{(ii) Gap closure.} For $f=h(\Ev,s)\in\mathcal{F}_{\mathrm{inv}}$,
Lemma~\ref{lem:decomp} gives
$R_{D}(f)=\mathbb{E}_{D(\Ev,s)}\|h-\mu\|^2+\mathbb{E}_{D}[\operatorname{tr}\operatorname{Cov}(\xa\mid\Ev,s)]$,
and both terms depend on $D$ only through the $(\Ev,s)$ marginal, which
$P_{\mathrm{tr}}$ and $P_{\mathrm{te}}$ share. Hence
$R_{\mathrm{tr}}(f)=R_{\mathrm{te}}(f)$.

\textbf{(iii) Necessity.} By (i), any global $R_{\mathrm{te}}$-minimizer $g$ satisfies
$\tilde g=\mu$ $P_{\mathrm{te}}$-a.s. Since under $P_{\mathrm{te}}$ we have
$\nuis\perp(\Ev,s)$ with $\nuis$ ranging over its full support, $\tilde g(\Ev,\nuis,s)=\mu(\Ev,s)$
for a.e.\ $(\Ev,\nuis,s)$, i.e.\ $\tilde g$ is a.s.\ constant in $\nuis$; thus
$g\in\mathcal{F}_{\mathrm{inv}}$ up to a null set, and $f^\star\in\mathcal{F}_{\mathrm{inv}}$.
Restricting to $\mathcal{F}_{\mathrm{inv}}$ therefore excludes no optimal predictor.

\textbf{Penalty equivalence.} $\mathcal{L}_{\mathrm{cf}}(f)=
\mathbb{E}_{\nuis'}\|\tilde f(\Ev,\nuis,s)-\tilde f(\Ev,\nuis',s)\|^2=0$ iff
$\tilde f(\Ev,\cdot,s)$ is a.s.\ constant on the support of $\nuis$, i.e.\ iff
$f\in\mathcal{F}_{\mathrm{inv}}$. Consequently $\min_f R_{\mathrm{tr}}(f)+\lambda
\mathcal{L}_{\mathrm{cf}}(f)$ is, in the limit of an active constraint, the program of
(i), whose solution is $f^\star$ whenever $f^\star$ is realizable.
\end{proof}

\begin{proof}[Proof of Proposition~\ref{prop:partial}]
Let $S\subseteq\{1,\dots,L\}$ and let $f$ be invariant to $\nuis_S$. By
Assumption~\ref{as:a2}, $f=h(\Ev,\nuis_{S^c},s)$. If $S^c=\emptyset$ then
$f=h(\Ev,s)\in\mathcal{F}_{\mathrm{inv}}$ and Theorem~\ref{thm:cf}(i) gives zero excess
OOD risk at the optimum. Otherwise pick any $l\in S^c$; by hypothesis
$\nuis_l=\pi_l(\Ev)$ $P_{\mathrm{tr}}$-a.s.\ with $\pi_l$ injective, so the predictor
$h_{\mathrm{sc}}(\Ev,\nuis_{S^c},s):=\mu(\pi_l^{-1}(\nuis_l),s)$ is invariant to
$\nuis_S$, agrees with $f^\star$ $P_{\mathrm{tr}}$-a.s., and therefore attains the Bayes
training risk---so it lies in the constrained argmin. Applying
Proposition~\ref{prop:blind} with the reduced nuisance $\nuis_l$ gives OOD excess risk
$\mathbb{E}\|\mu(\Ev',s)-\mu(\Ev,s)\|^2>0$ with $\Ev'=\pi_l^{-1}(\nuis_l)\perp\Ev$.
Hence the constrained program admits a shortcut solution with strictly positive OOD
excess risk whenever some confounded nuisance is left un-augmented, and zero excess
risk exactly when $S$ contains every confounded nuisance.
\end{proof}

\begin{remark}[Extension to conditional generative distributions]\label{rem:scoring}
Theorem~\ref{thm:cf} is stated for conditional-\emph{mean} prediction under squared error, but
the identification argument does not depend on that choice. Let the model output a conditional
density $q(\cdot\mid\xv,s)$ and be scored by a strictly proper scoring rule
$S(q,\xa)$, e.g.\ the log score. By strict propriety the population risk
$\mathbb{E}[S(q(\cdot\mid\xv,s),\xa)]$ is uniquely minimized at the true conditional
$q^\star(\cdot\mid\xv,s)=p(\xa\mid\Ev,\nuis,s)$, which by Assumption~\ref{as:a1} equals
$p(\xa\mid\Ev,s)$ and is therefore $\nuis$-invariant. Restricting to the counterfactually
invariant class $\mathcal{F}_{\mathrm{inv}}$ (now defined on the conditional law
$q(\cdot\mid g_{\mathrm v}(\Ev,\nuis),s)$) thus loses no optimality and closes the train/test
gap, exactly as in Theorem~\ref{thm:cf}(i)--(iii); the counterfactual-consistency penalty
becomes a divergence between $q(\cdot\mid\xv,s)$ and $q(\cdot\mid\xv',s)$. We use this only to
justify the framing; the generative results (Table~\ref{tab:gen}) are reported empirically.
\end{remark}

\section{Experimental Details}
\label{app:details}

All experiments run on a single NVIDIA RTX A6000; every run completes in minutes.
Optimizer is Adam throughout. Unless stated otherwise the counterfactual weight is
$\lambda=1$ and the training confounding is $\rho_{\mathrm{spur}}=1$.

\subsection{Feature-vector SCM (Tables~\ref{tab:onset}, \ref{tab:partial},
\ref{tab:gen}, \ref{tab:shared}; Figures~\ref{fig:onset}, \ref{fig:probe}).}
\label{app:fvscm}
$K{=}8$ events, $M{=}8$ nuisance values, video dimension $d_{\mathrm{v}}{=}32$, audio
dimension $d_{\mathrm{a}}{=}16$. The video is
$\xv=[\,\gamma\cdot\mathrm{onehot}(\Ev)\ ;\ \mathrm{onehot}(\nuis)\,]W_{\mathrm{v}}+0.05\,\epsilon$
with a fixed random $W_{\mathrm{v}}\in\mathbb{R}^{(K+M)\times d_{\mathrm{v}}}$ scaled by
$1/\sqrt{K+M}$ and \emph{geometry strength} $\gamma{=}0.6$: the causal event signal is
deliberately weaker than the clean one-hot nuisance, which makes the shortcut the path
of least resistance. The audio is
$\xa=(\mathrm{onehot}(\Ev)W_{\mathrm{a}})\cdot(1/s)+0.03\,\epsilon$ with
$W_{\mathrm{a}}\in\mathbb{R}^{K\times d_{\mathrm{a}}}$ and scene distance
$s\sim\mathcal{U}[0.5,1.5]$ (so the scene sets loudness, and audio depends on
$(\Ev,s)$ only). Training draws $\nuis=\Ev \bmod M$ with probability
$\rho_{\mathrm{spur}}$ and uniform otherwise; the OOD-cf test draws $\nuis$ uniformly,
independent of $\Ev$. \emph{Direct}: MLP $[d_{\mathrm{v}}{+}1\to256\to256\to d_{\mathrm{a}}]$
with SiLU. \emph{CC-bottleneck / CC+CF}: encoder $[d_{\mathrm{v}}\to256\to z]$ with
$z{=}6$, decoder $[z{+}1\to256\to d_{\mathrm{a}}]$. Training: lr $2\times10^{-3}$, batch
$256$, $3000$ steps. Evaluation: $4000$ fresh samples. The counterfactual batch
resamples $\nuis$ while holding $(\Ev,s)$ fixed.

\subsection{Slice-local confounding (Table~\ref{tab:partial}).}
Same SCM; the first $4$ of $8$ event classes are perfectly confounded
($\nuis=\Ev$), the remaining classes always receive a uniform nuisance. The test
decouples all classes; we report the median per-sample error separately over the
confounded and clean event classes ($8000$ evaluation samples).

\subsection{Procedural pixel video (Table~\ref{tab:pixels}).}
$32{\times}32$ RGB. Each of $K{=}8$ events is a fixed template of $2$--$4$ Gaussian
blobs ($\sigma{=}0.16$) at event-specific offsets, rendered with random global
translation ($\pm0.3$) and scale ($0.8$--$1.2$); the nuisance is one of $M{=}8$ RGB
colors multiplying the grayscale render. Audio is a per-event $16$-d spectrum scaled by
$1/s$. Encoder: three stride-2 convolutions ($3{\to}32{\to}64{\to}64$, kernel $4$)
followed by a linear map. \emph{Direct} uses a $128$-d code, \emph{bottleneck} a $6$-d
code, \emph{disentangle} $z_{\mathrm{s}}{=}6$ shared and $z_{\mathrm{v}}{=}32$ private
with a mirrored transposed-convolution image decoder trained on reconstruction plus the
audio loss from $z_{\mathrm{s}}$ alone. \emph{CFaug} re-renders the same grayscale with
a random palette color and penalizes the audio discrepancy. Training: lr $10^{-3}$,
batch $128$, $3000$ steps; evaluation on $3000$ samples.

\subsection{Temporal video (Table~\ref{tab:video}).}
$K{=}8$ motion directions (angles $2\pi k/8$), $T{=}8$ frames, $32{\times}32$ RGB. A
Gaussian blob ($\sigma{=}0.13$) starts at a uniformly random position in
$[-0.35,0.35]^2$ and is displaced by $1.1\cdot(t/T)$ along its direction, so the
\emph{start position carries no information about the event} and only the cross-frame
displacement identifies it; the color (one of $8$) is constant within a clip and
confounded with the direction. Audio is a per-direction $16$-d spectrum. The model
stacks time on channels ($3T{=}24$) and applies three stride-2 convolutions
($24{\to}32{\to}64{\to}64$) plus a linear head. Training: lr $10^{-3}$, batch $128$,
$4000$ steps. The counterfactual re-renders the same clip (same motion and start) in a
random color. Motion-accuracy is nearest-neighbour classification of the predicted audio
against the $8$ direction spectra ($2000$ evaluation samples).

\subsection{Real modalities: Colored-MNIST with spectrogram audio (Table~\ref{tab:real}).}
Images: the $60{,}000$ MNIST training digits, tinted by one of $10$ palette colors,
bilinearly resized to $64{\times}64$ and ImageNet-normalized. Audio: for digit $d$ the
tone has fundamental $f_0=220\cdot2^{d/12}$\,Hz with five harmonics of amplitude $1/h$
and envelope $e^{-3t/T}$, $T{=}0.4$\,s at $16$\,kHz; the target is
$\log(1+\mathrm{Mel})$ with $n_{\mathrm{fft}}{=}512$, hop $256$, $32$ mel bands,
flattened to $832$ dimensions. Encoder: torchvision ResNet-18 with ImageNet weights,
final layer removed, \textbf{frozen} (only the head is trained), giving $512$-d
features. Head: MLP $[512\to512\to512\to832]$ with SiLU, lr $10^{-3}$, batch $128$,
$4000$ steps. The counterfactual re-tints the same digit with a random palette color.
Event-accuracy is nearest-neighbour classification of the predicted spectrogram against
the $10$ digit tones ($3000$ evaluation samples; chance $=0.1$).

\subsection{Conditional generation (Table~\ref{tab:gen}).}
Conditional flow matching \citep{lipman2023flow} on the SCM of
Appendix~\ref{app:fvscm}. The velocity field takes
$(\xa,t,E(\xv),s)$ with a video encoder $[d_{\mathrm{v}}\to256\to64]$ and a trunk
$[d_{\mathrm{a}}{+}1{+}64{+}1\to256\to256\to d_{\mathrm{a}}]$. Training samples
$x_0\sim\mathcal{N}(0,I)$, $x_1=\xa$, $t\sim\mathcal{U}[0,1]$,
$x_t=(1-t)x_0+tx_1$, and regresses the target velocity $x_1-x_0$; lr $2\times10^{-3}$,
batch $256$, $4000$ steps. The counterfactual term matches the velocity fields
conditioned on the original and on a nuisance-resampled video. Sampling integrates the
ODE with $20$ Euler steps. Event-accuracy compares the \emph{sampled} audio to the
$K$ event spectra scaled by the sample's $1/s$.

\subsection{Mechanistic probe (Figure~\ref{fig:probe}, left).}
For each trained model we capture the input to its final linear layer---the
audio-producing representation---via a forward hook, on $4000$ samples drawn with
$\nuis$ \emph{decoupled} from $\Ev$ (so that event and nuisance are separable). We then
fit a linear classifier (Adam, lr $10^{-2}$, $800$ steps) on one half of the samples to
predict the event, and, separately, the nuisance, and report accuracy on the held-out
half. Chance is $1/8=0.125$.

\subsection{Multiple nuisances (Figure~\ref{fig:probe}, right).}
$L{=}4$ nuisances, each with $M{=}8$ values and each perfectly confounded with the event
in training; video dimension $48$ with
$W_{\mathrm{v}}\in\mathbb{R}^{(K+LM)\times d_{\mathrm{v}}}$. The intervention resamples
the first $k$ nuisances ($k=0,\dots,4$) while holding the event and the remaining
nuisances fixed. Everything else follows Appendix~\ref{app:fvscm}.

\subsection{Continuous nuisance (Appendix~\ref{app:ablate}).}
The nuisance is a continuous vector $\nuis\in\mathbb{R}^{8}$ drawn as
$\mathrm{emb}(\Ev)+0.25\,\epsilon$ in training, with a fixed random
$\mathrm{emb}\in\mathbb{R}^{K\times8}$ scaled by $1.5$; at test the embedding index is
drawn independently of the event. The counterfactual resamples the continuous nuisance
for a fresh, independently drawn event index.

\subsection{Shared-prior baseline and capacity sweep (Appendix~\ref{app:ablate}).}
The shared-prior model encodes $z_{\mathrm{s}}=E(\xv)\in\mathbb{R}^{8}$ and trains
$\|{\rm aud}(z_{\mathrm{s}},s)-\xa\|^2+\beta\|{\rm vid}(z_{\mathrm{s}})-\xv\|^2$ with
$\beta{=}1$, so that both modalities are aligned to the single shared code. The capacity
sweep varies the hidden width of the \emph{direct} model over
$\{32,64,128,256,512,1024\}$ ($2.7$K--$1.1$M parameters) with all else fixed.

\subsection{Baseline study and label scarcity (Table~\ref{tab:baselines},
Figure~\ref{fig:labelscarce}).}
All five variants share one architecture with an explicit encoder/head split
($[d_{\mathrm{v}}{+}1\to256\to256]$ encoder, linear head to $d_{\mathrm{a}}$) so that
representation-level consistency is well defined; only the loss differs. In the
label-scarce runs a Bernoulli($p$) mask marks which samples carry an audio label: the
supervised terms (including augmentation's counterfactual term) are averaged over
labeled samples only, while the consistency term---which never references
$\xa$---is applied to the whole batch. This is exactly the asymmetry the experiment
measures.

\paragraph{Reproducibility and error bars.} All worlds, model initializations, and
training batches are generated from fixed seeds. We report mean $\pm$ std over independent
training seeds wherever a comparison could plausibly be seed-sensitive: the baseline study
(Table~\ref{tab:baselines}, $5$ seeds), the cue and label-scarcity sweeps
(Figures~\ref{fig:cue}, \ref{fig:labelscarce}, $5$/$3$ seeds), the emergence and DG
baselines, the representation-depth and partially-informative-nuisance ablations
(Appendix~\ref{app:review}, $3$ seeds), and the two headline real-modality tables
(Colored-MNIST Table~\ref{tab:real} and pixel Table~\ref{tab:pixels}, $3$ seeds). We give
per-quantity std where a table has few columns and a summary bound (``std $\le x$'') where a
table is wide, purely for readability; the underlying runs are the same. The
conditional-generation (Table~\ref{tab:gen}), temporal-video (Table~\ref{tab:video}) and
moving-digit (Table~\ref{tab:mm}) tables are likewise reported over $3$ seeds; their
between-method gaps are one-to-three orders of magnitude (e.g.\ OOD accuracy
$0.13$--$0.21$ vs.\ $1.00$) and the measured std is far below those gaps, so the qualitative
conclusion is seed-independent. Finally, the MMAudio fine-tune (Section~\ref{sec:realv2a})
is now run with $3$ independent optimization seeds and evaluated, together with the frozen
baseline, on a common held-out set ($12$ videos $\times 2$ seeds) so that the before/after
comparison is strictly matched; we report mean$\pm$std and, honestly, find that the
intervention stably reduces the targeted grayscale sensitivity but broadly dampens input
sensitivity rather than isolating an appearance shortcut. Other reported numbers are medians
of per-sample errors or means over the evaluation set, as indicated in each caption.

\section{Experiments Moved from the Main Text for Space}
\label{app:extra}

These experiments support the main-text narrative but are placed here for space. They use the same settings as Section~\ref{sec:experiments}.

\subsection{Consistency vs.\ supervised augmentation (full results)}
\label{app:baselines}

If one can synthesize the counterfactual $\xv'=\mathrm{do}(\nuis:=\nuis')$ and one knows
it carries the same audio $\xa$, the obvious alternative is to simply \emph{train on it
with supervision}:
$\mathcal{L}_{\mathrm{aug}}=\|f(\xv,s)-\xa\|^2+\|f(\xv',s)-\xa\|^2$. This baseline is
the sharpest test of our objective, and we report it honestly.
Table~\ref{tab:baselines} compares five variants over five seeds.

\begin{table}[htbp]
\centering
\caption{Is the counterfactual \emph{consistency} objective necessary, or does supervised
counterfactual \emph{augmentation} suffice? Feature-vector SCM, $\rho_{\mathrm{spur}}{=}1$,
mean $\pm$ std over $5$ seeds. With full labels the two are statistically
indistinguishable---the essential ingredient is the intervention, not the loss form.
Representation-level consistency, by contrast, is \emph{not} sufficient.}
\label{tab:baselines}
\begin{tabular}{lcc}
\toprule
Method & IID MSE & OOD-cf MSE \\
\midrule
Direct (no intervention)                & 0.0010 $\pm$ 0.0000 & 0.1827 $\pm$ 0.0029 \\
Supervised CF augmentation              & 0.0011 $\pm$ 0.0000 & 0.0012 $\pm$ 0.0000 \\
CF consistency, prediction-level (ours) & 0.0010 $\pm$ 0.0000 & \textbf{0.0011 $\pm$ 0.0000} \\
CF consistency, representation-level    & 0.0012 $\pm$ 0.0000 & 0.0140 $\pm$ 0.0007 \\
Augmentation $+$ consistency            & 0.0010 $\pm$ 0.0000 & \textbf{0.0011 $\pm$ 0.0000} \\
\bottomrule
\end{tabular}
\end{table}

Two findings follow, and the first is a concession. \textbf{When audio labels are
available for every sample, supervised counterfactual augmentation matches our
consistency objective exactly} ($0.0012$ vs.\ $0.0011$); the two are interchangeable.
We therefore do not claim the consistency \emph{loss} as the contribution. What the
comparison does establish is the paper's actual thesis: every method that
\emph{intervenes on the nuisance} closes the gap ($\sim\!0.001$), while every method that
merely reshapes the representation---the bottleneck, the unsupervised shared/private
factorization, the faithful shared-prior model, and, as the table shows,
representation-level consistency ($0.0140$, $12\times$ worse)---does not. The dividing
line runs between intervention and architecture, not between two losses.

Second, the two are \emph{not} interchangeable once labels are scarce
(Figure~\ref{fig:labelscarce}). The consistency term is \emph{label-free}: it constrains
$f(\xv)$ against $f(\xv')$ without ever needing $\xa$, so it can be applied to unlabeled
video, whereas augmentation can only use counterfactuals of \emph{labeled} samples. As the
labeled fraction falls from $1.0$ to $0.02$, augmentation degrades by $4.6\times$
($0.0012\!\to\!0.0055$) while consistency degrades by $3.3\times$
($0.0011\!\to\!0.0036$), a consistent ${\sim}1.5\times$ advantage at every scarce setting
with non-overlapping error bars. This is the regime real AV data actually lives in---
abundant unlabeled video, few clean paired examples---and it is the one place where the
choice of loss, rather than the presence of the intervention, matters.

\begin{figure}[htbp]
\centering
\includegraphics[width=0.55\linewidth]{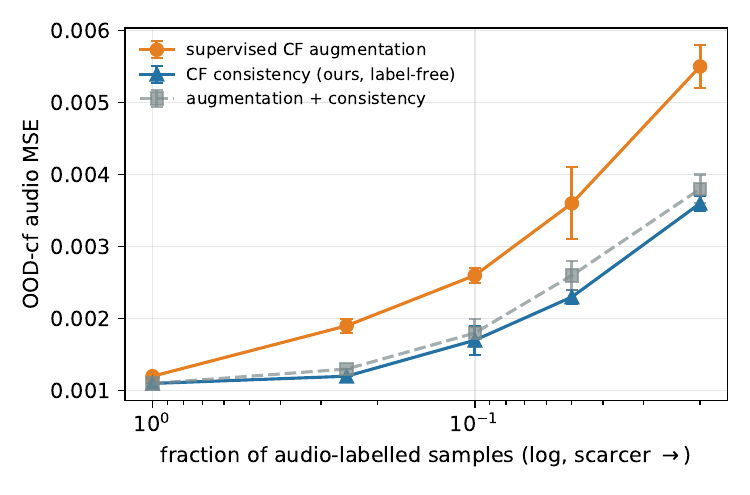}
\caption{Supervised counterfactual augmentation and our consistency objective are
interchangeable when every sample is labeled, but the consistency term needs no audio
label for the counterfactual and therefore keeps improving from unlabeled video as
labels become scarce ($3$ seeds, $\pm$std).}
\label{fig:labelscarce}
\end{figure}

\subsection{When is the shortcut taken? Cue difficulty decides}
\label{sec:cue}

A shortcut is only attractive if it is \emph{easier} than the causal path, so the
phenomenon should not be unconditional---and an honest account must say when it
occurs. We sweep the strength $\gamma$ of the causal event signal in the video while
holding the nuisance a clean one-hot (Figure~\ref{fig:cue}; $5$ seeds, dense sampling
through the transition). The direct model's in-distribution error is flat and near zero
throughout ($0.0010\pm0.0000$), but its counterfactual error falls monotonically and
smoothly from $0.2632\pm0.0006$ at $\gamma{=}0.2$ to $0.0027\pm0.0001$ at $\gamma{=}4$---
a $263\times$ penalty shrinking to $2.7\times$: once the event is easy to read, the model
simply reads it and there is no shortcut to block. The transition
($\gamma\!\in\![0.8,2.0]$) is smooth and stable across seeds (std $\approx1\%$ of the
mean), not a knife-edge. Counterfactual consistency is flat and robust across the whole
sweep and---the practically important point---costs almost nothing where it is
unnecessary ($0.0013$ vs.\ $0.0010$ at $\gamma{=}4$). It is therefore cheap insurance
that can be left on by default.

\begin{figure}[htbp]
\centering
\includegraphics[width=0.55\linewidth]{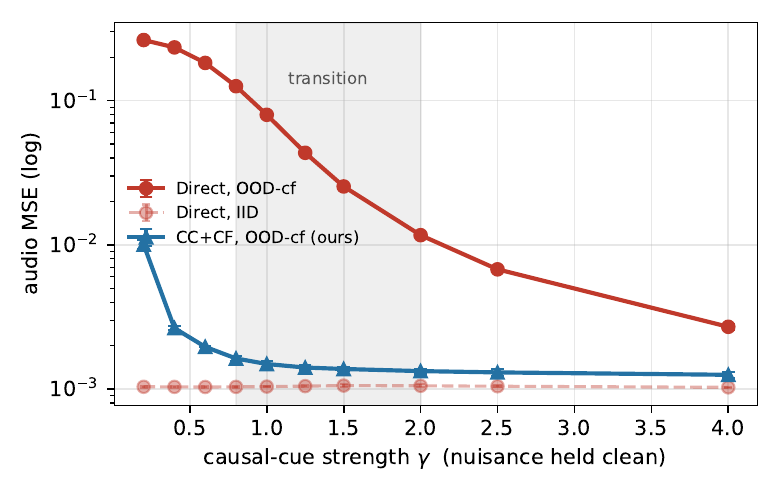}
\caption{The shortcut is conditional. As the causal cue becomes easier to read
($\gamma\uparrow$) the direct model's counterfactual penalty collapses from $263\times$
to $2.7\times$, smoothly and stably across seeds; the counterfactual objective is robust
throughout and nearly free when unnecessary. Shaded: the transition region.
($5$ seeds, $\pm$std; error bars are smaller than the markers.)}
\label{fig:cue}
\end{figure}

\subsection{Real digits with a temporal cause: both regimes, as predicted}

Section~\ref{sec:cue} makes a falsifiable prediction, which we test in the most
realistic setting we can build: a \emph{real} MNIST digit sprite translating in one of
eight directions (the event), tinted with a confounded colour (the nuisance), with the
digit's identity irrelevant and its start position random---so only the motion
identifies the event. Table~\ref{tab:mm} confirms both regimes. When the motion is
easy (a large $0.85$ travel, no noise), the direct model reads it and is
\emph{already robust} ($1.00$ OOD accuracy): there is no shortcut and nothing to fix.
When the motion is made hard (travel $0.30$, frame noise $0.35$), the direct model
switches to the colour and collapses to near chance ($0.21$, chance $=0.125$), while
counterfactual consistency stays at $0.995$. The shortcut is thus neither universal nor
contrived: it appears exactly where the theory says it should, on real pixels with a
genuinely temporal cause.

\begin{table}[htbp]
\centering
\caption{Moving-MNIST (real digit sprites, event $=$ motion direction, nuisance $=$
colour, chance $=0.125$). The shortcut appears only when the causal motion cue is hard,
exactly as the cue-difficulty sweep predicts; the counterfactual objective is robust in
both regimes. Mean$\pm$std over $3$ seeds (OOD columns).}
\label{tab:mm}
\begin{tabular}{llcccc}
\toprule
& & \multicolumn{2}{c}{IID} & \multicolumn{2}{c}{OOD-cf} \\
\cmidrule(lr){3-4}\cmidrule(lr){5-6}
Motion cue & Model & MSE & acc & MSE & acc \\
\midrule
easy (travel $0.85$)            & Direct       & 0.0014 & 1.00 & $0.006{\pm}0.002$ & $1.00{\pm}0.00$ \\
                                & CC+CF (ours) & 0.0009 & 1.00 & $0.001{\pm}0.000$ & $1.00{\pm}0.00$ \\
\midrule
hard (travel $0.30$, noise $0.35$) & Direct    & 0.0015 & 1.00 & $1.180{\pm}0.141$ & $\mathbf{0.209{\pm}0.033}$ \\
                                & CC+CF (ours) & 0.0049 & 1.00 & $0.009{\pm}0.002$ & $\mathbf{0.995{\pm}0.002}$ \\
\bottomrule
\end{tabular}
\end{table}

\subsection{Mechanistic probe: the fix suppresses the nuisance in the audio code}

Why do the models differ? We linearly probe the audio-producing representation
(the input to the final audio layer) for event vs.\ material information, on
decoupled data where the two are separable (Figure~\ref{fig:probe}). In the
direct and bottleneck models the material is fully decodable from the
audio representation (accuracy $1.0$)---the sound is being read off the
nuisance. Counterfactual consistency drives material-decodability down sharply, close to
chance ($0.24$ vs.\ chance $0.125$; not to zero) while keeping event-decodability
at $1.0$: the representation feeding audio is reorganized around the cause, though a
residual trace of the nuisance remains. This is
the mechanism behind the OOD gap. This reorganization holds specifically at the
\emph{output} representation; applying the same consistency constraint at earlier layers
removes the shortcut only partially (Appendix~\ref{app:depth}).

\begin{figure}[htbp]
\centering
\includegraphics[width=0.48\linewidth]{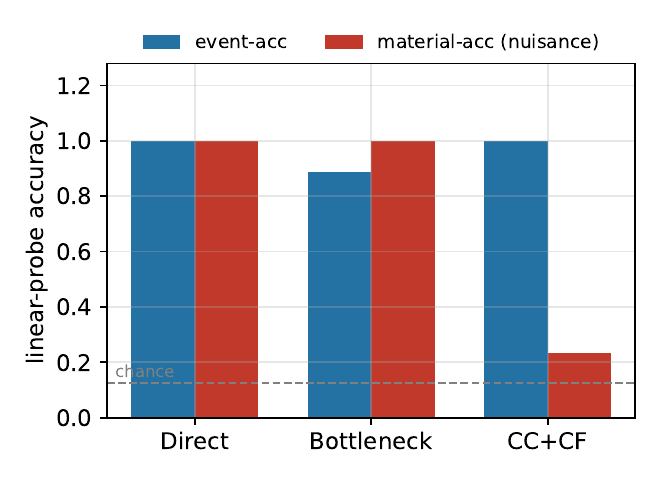}
\hfill
\includegraphics[width=0.48\linewidth]{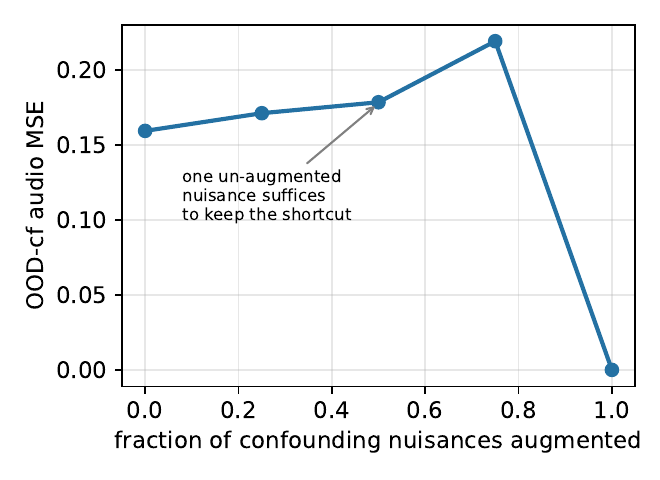}
\caption{\textbf{Left:} linear-probe accuracy of the audio-producing
representation. Direct/bottleneck leak the nuisance (high material-accuracy);
CC+CF suppresses it (to near, not exactly, chance). \textbf{Right:} with multiple perfectly-confounded nuisances,
robustness is all-or-nothing---any single un-augmented nuisance keeps a complete
shortcut, so OOD error only collapses once \emph{every} confounder is
intervened on (experiment of Section~\ref{sec:experiments}, matching
Proposition~\ref{prop:partial}).}
\label{fig:probe}
\end{figure}

\subsection{Multiple nuisances: robustness is all-or-nothing}

Real data confounds through several appearance factors at once, and one may be
able to augment only some of them. With $L{=}4$ nuisances, each perfectly
confounded with the event, we vary the fraction on which the intervention is
applied (Figure~\ref{fig:probe}, right). Because each nuisance individually
determines the event, any single \emph{un}-augmented nuisance is a complete
shortcut: OOD error stays high (indeed slightly worsens, as the model
concentrates on the remaining shortcuts) until \emph{all} confounders are
covered, at which point it collapses to near zero. This is exactly
Proposition~\ref{prop:partial}: partial augmentation buys nothing when nuisances
are individually sufficient. This all-or-nothing behaviour is specific to that regime;
when each nuisance is only partially predictive of the event, partial augmentation
already helps and the response is graded (Appendix~\ref{app:partialinfo2}). Either way it
sharpens the open problem---to be safe one must identify and intervene on the confounding
factors, and knowing which they are is exactly what one lacks in the wild.

\subsection{The mechanism survives real pixels; architecture alone does not}

Table~\ref{tab:pixels} moves to procedurally rendered pixel video: the event is
a shape arrangement (the causal signal), the nuisance is color, color is
perfectly confounded with the shape in training and decoupled at test, and audio
is the event-determined target. On genuine pixels the direct model exploits
color and its OOD-cf error explodes by $100\times$; the bottleneck and the
unsupervised shared/private disentanglement fail just as hard, confirming that
architecture alone does not block the shortcut. Only the counterfactual
intervention---here a random-recolor augmentation, a feasible real-world
counterfactual---remains robust (OOD-cf $\approx$ IID), while fitting the IID
data as well as the direct model.

\begin{table}[htbp]
\centering
\caption{Procedural pixel audio--video ($32{\times}32$ RGB, color = nuisance,
perfectly confounded). Audio MSE, mean over $3$ seeds (std $\le0.001$ IID,
$\le0.055$ OOD). Only the counterfactual intervention (recolor
augmentation) blocks the shortcut; a bottleneck and an unsupervised shared/private
factorization do not.}
\label{tab:pixels}
\begin{tabular}{lccc}
\toprule
Model & IID & OOD-cf & OOD/IID \\
\midrule
Direct                    & 0.010 & 1.042 & 105$\times$ \\
Bottleneck                & 0.011 & 1.155 & 107$\times$ \\
Disentangle (shared/priv) & 0.010 & 1.119 & 109$\times$ \\
CC+CF, recolor (ours)     & 0.010 & \textbf{0.010} & \textbf{1.0$\times$} \\
\bottomrule
\end{tabular}
\end{table}

\subsection{Temporal video: when the causal cue is motion}

All settings so far treat a frame; but in video the causal event is often
\emph{temporal}. We therefore make the event a \emph{motion}: a colored blob
travels in one of $K{=}8$ directions over $T{=}8$ frames, and the audio is
determined by that direction. The blob's start position is randomized, so no
single frame reveals the direction---only the cross-frame displacement does---while
the color is visible in \emph{every} frame and is confounded with the direction in
training (Figure~\ref{fig:vidstrip}). The shortcut is therefore ``read the color from
any frame'' and the causal path requires integrating motion.
Table~\ref{tab:video} shows the direct
model's motion-accuracy falling from $1.00$ to $0.52$ and its error exploding by
$2000\times$ under the counterfactual color shift, whereas counterfactual
consistency---which forces the model onto the temporal cue---is exactly robust
($1.00$ accuracy, error unchanged).

\begin{figure}[htbp]
\centering
\includegraphics[width=0.98\linewidth]{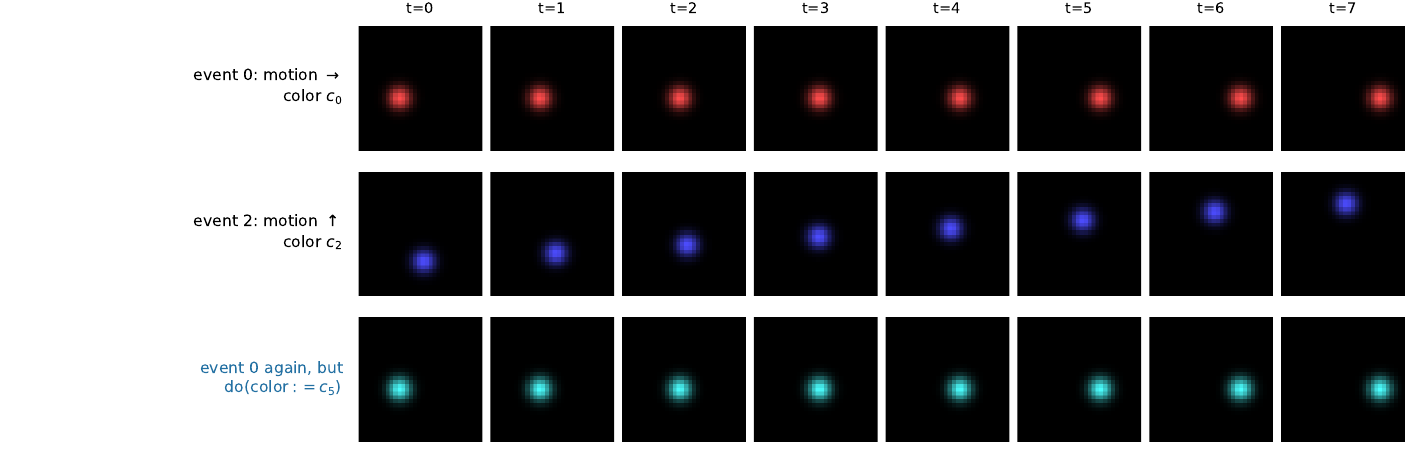}
\caption{The temporal setting. The event is the \emph{motion}, which is identifiable
only across frames because the start position is random; the nuisance colour is
visible in every frame and is therefore the tempting shortcut. The third row is the
counterfactual of the first: identical motion, resampled colour.}
\label{fig:vidstrip}
\end{figure}

\begin{table}[htbp]
\centering
\caption{Temporal video: event $=$ motion direction, nuisance $=$ color
($\rho_{\mathrm{spur}}{=}1$, chance accuracy $=0.125$). Only the cross-frame
motion identifies the event; color is a per-frame shortcut. Counterfactual
consistency forces the model onto the temporal cue. Mean$\pm$std over $3$ seeds.}
\label{tab:video}
\begin{tabular}{lcccc}
\toprule
& \multicolumn{2}{c}{IID} & \multicolumn{2}{c}{OOD-cf} \\
\cmidrule(lr){2-3}\cmidrule(lr){4-5}
Model & MSE & motion-acc & MSE & motion-acc \\
\midrule
Direct        & 0.0003 & 1.00 & $0.664{\pm}0.044$ & $0.517{\pm}0.025$ \\
CC+CF (ours)  & 0.0002 & 1.00 & $\mathbf{0.0002{\pm}0.0001}$ & $\mathbf{1.00{\pm}0.00}$ \\
\bottomrule
\end{tabular}
\end{table}

\subsection{Summary of the experimental findings}
Across a feature-vector SCM, slice-local confounding, procedural pixel video, real
images with spectrogram audio, moving real digits, a conditional generator, and a
mechanistic probe, one line separates what works from what does not, and it is not the
line we might have hoped for. \emph{Intervening} on the nuisance blocks the shortcut---
whether the intervention is delivered as supervised counterfactual augmentation or as a
prediction-level consistency penalty, which are interchangeable when labels are
plentiful (Section~\ref{sec:baselines}). \emph{Reshaping the representation} does not:
the bottleneck, unsupervised shared/private disentanglement, a faithful shared-prior
model, and representation-level consistency all leave the shortcut intact. Two conditions delimit when it matters, and both are empirical, not
assumed: the training confounding must be near-perfect (possibly slice-local), or at least
sufficiently strong---Table~\ref{tab:emergence} shows a clear shortcut already at
$\rho{=}0.99$ under a limited training budget---and the causal cue must be
hard to read relative to the nuisance. Where those conditions fail, the direct model is
already fine---and the objective costs almost nothing there, so it can be left on by
default. Where they hold, it is the difference between synthesizing the right event's
sound and the wrong one.

\paragraph{Qualitative and real-model figures.} Figures~\ref{fig:specfail} (the failure made audible on Colored-MNIST) and~\ref{fig:realv2a} (the real MMAudio generator's non-invariance) accompany the main-text discussion.

\begin{figure}[htbp]
\centering
\includegraphics[width=\linewidth]{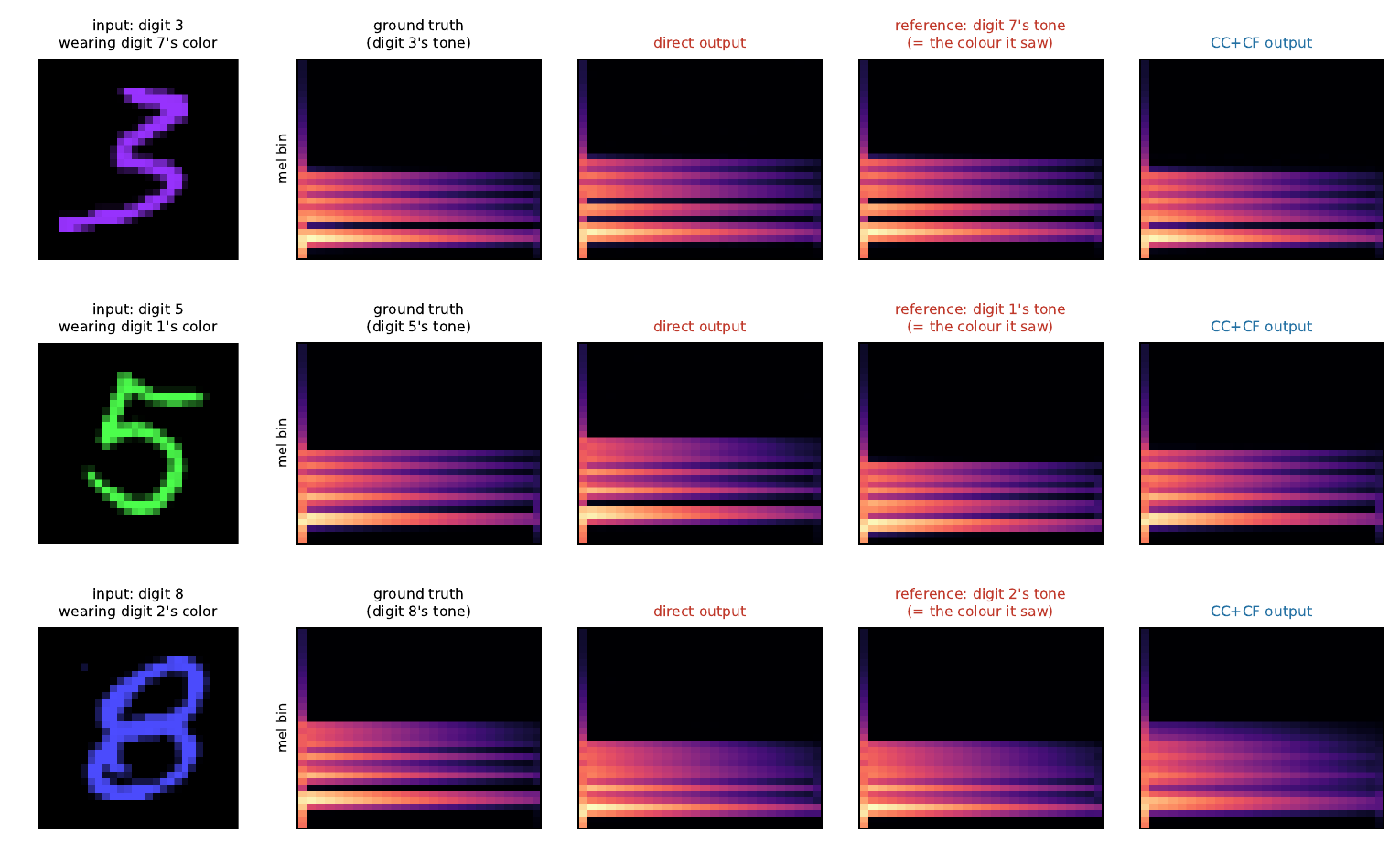}
\caption{\textbf{The failure, made audible.} Each row shows a digit wearing another
digit's colour. The direct model's generated spectrogram (col.~3) matches the tone of
the digit whose \emph{colour} it saw (col.~4)---not the ground truth of the digit it
was actually shown (col.~2). Counterfactual consistency (col.~5) recovers the correct
tone. The shortcut is not a small quantitative degradation: the model synthesizes the
wrong event's sound.}
\label{fig:specfail}
\end{figure}

\begin{figure}[htbp]
\centering
\includegraphics[width=0.66\linewidth]{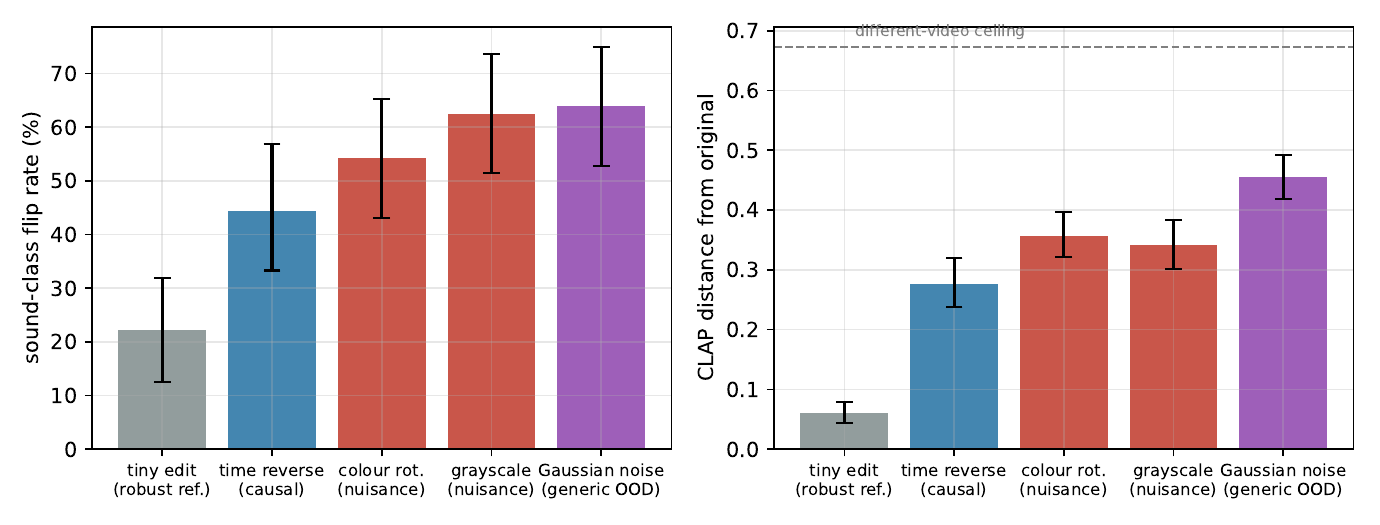}
\caption{A real pretrained video-to-audio generator (MMAudio) is far from invariant to
sound-irrelevant edits, but not specifically to appearance. Sound-class flip rate (left)
and CLAP distance (right) of the generated audio under each input edit ($24$ real videos,
$3$ seeds; $95\%$ bootstrap CIs). Appearance edits (grayscale, colour rotation; red) move
the sound far above the minimal-edit reference---but a generic Gaussian-noise OOD control
(purple), which carries no event information, moves it at least as much. The real model is
broadly input-brittle, and---because the noise control is a larger, unmatched
perturbation---this experiment does \emph{not} establish colour-specific shortcutting.
Isolating a colour-specific shortcut requires the controlled confounds of the synthetic
studies.}
\label{fig:realv2a}
\end{figure}

\section{Additional Robustness Experiments}
\label{app:ablate}

All experiments here use the feature-vector SCM of Section~\ref{sec:experiments}
at $\rho_{\mathrm{spur}}{=}1$.

\paragraph{Faithful shared-prior baseline.}
Table~\ref{tab:shared} gives the full numbers for the shared-prior stand-in
discussed in Section~\ref{sec:experiments}: a single code $z_{\mathrm{s}}$ that
generates audio and reconstructs video (both modalities aligned to it). It
collapses under the counterfactual shift like the direct model, while the
counterfactual objective is $\sim\!76\times$ better.

\begin{table}[h]
\centering
\caption{A faithful shared-prior model (audio generated from, and video
reconstructed from, one shared code) does not block the shortcut.}
\label{tab:shared}
\begin{tabular}{lcc}
\toprule
Model & IID MSE & OOD-cf MSE \\
\midrule
Direct                & 0.0010 & 0.1816 \\
Shared-prior (JavisDiT-style) & 0.0016 & 0.1515 \\
CC+CF (ours)          & 0.0019 & \textbf{0.0020} \\
\bottomrule
\end{tabular}
\end{table}

\paragraph{Weight of the counterfactual term.}
Figure~\ref{fig:ablate} (left) sweeps $\lambda$. Robustness appears as soon as the
term is switched on and holds over two orders of magnitude
($\lambda\in[0.1,10]$: OOD-cf MSE $0.0032\!\to\!0.0020$), while the
in-distribution cost is negligible throughout ($0.0016\!\to\!0.0022$). The method
requires no tuning.

\paragraph{Continuous nuisance.}
The nuisance need not be a discrete palette. With a \emph{continuous} nuisance
vector (an event-dependent embedding plus noise in training, independent at test),
the direct model still takes the shortcut (OOD-cf MSE $0.241$) and the
counterfactual objective, whose intervention resamples the continuous nuisance,
remains robust ($0.0002$).

\paragraph{Capacity does not fix the shortcut.}
Figure~\ref{fig:ablate} (right) scales the direct model from $2.7$K to $1.1$M
parameters ($400\times$). In-distribution error is flat and near zero throughout,
and \emph{the OOD-cf error does not improve at all} ($0.173\!\to\!0.184$). The
shortcut is therefore not an underfitting artifact and cannot be scaled away---it
is a property of what the training distribution identifies
(Proposition~\ref{prop:blind}), not of model capacity.

\begin{figure}[h]
\centering
\includegraphics[width=0.9\linewidth]{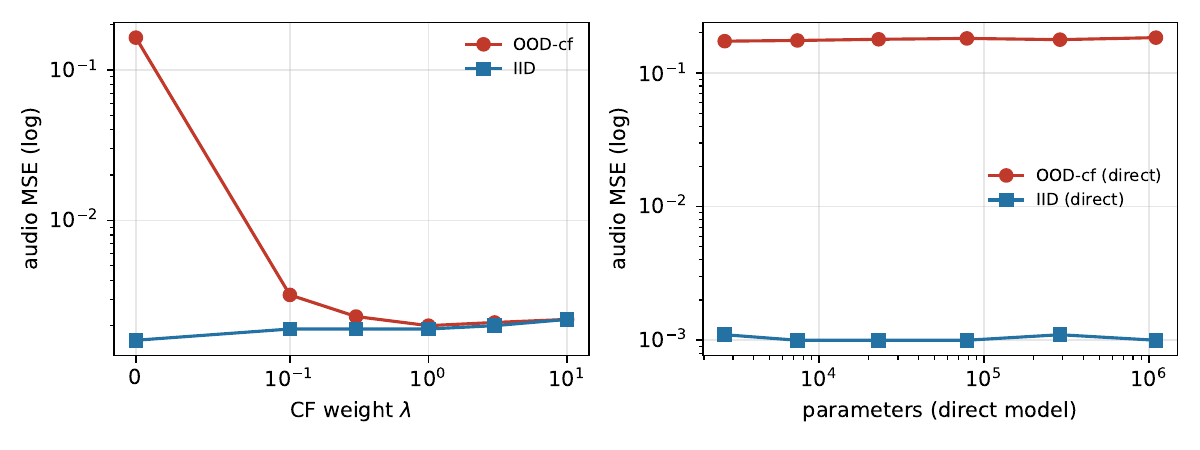}
\caption{\textbf{Left:} the counterfactual term needs no tuning---robustness holds
across two orders of magnitude of $\lambda$ at negligible IID cost.
\textbf{Right:} scaling the direct model $400\times$ leaves the shortcut entirely
intact (OOD-cf flat) while IID error stays near zero.}
\label{fig:ablate}
\end{figure}

\section{When the Shortcut Emerges, and Comparisons to Invariance Baselines}
\label{app:review}

This appendix maps the conditions under which the shortcut becomes catastrophic and
compares the counterfactual intervention against domain-generalization and invariance
methods. All experiments use the feature-vector SCM; error bars, where shown, are over
$3$ seeds.

\subsection{Emergence: confounding strength, training budget, and nuisance richness.}
Proposition~\ref{prop:blind} proves non-identifiability at \emph{perfect} confounding
but does not, by itself, say the shortcut is absent below it. Table~\ref{tab:emergence}
and Figure~\ref{fig:emergence} therefore vary the confounding $\rho_{\mathrm{spur}}$
against the training budget. The failure is \emph{graded}: at $\rho_{\mathrm{spur}}\le0.95$
more training drives the direct model toward the causal solution (the decorrelated
examples eventually dominate), whereas at $\rho_{\mathrm{spur}}{=}1$ more training pushes
it \emph{further} into the shortcut. ``Near-perfect confounding'' is thus where the
failure becomes catastrophic \emph{in this regime}, and we state it that way rather than
as an unconditional law. Richer nuisances make the shortcut easier still: OOD-cf error
rises from $0.003$ at $M{=}2$ nuisance values to $0.13$ at $M{=}8$.

\begin{table}[h]
\centering
\caption{Emergence of the shortcut: direct-model OOD-cf MSE as a function of confounding
$\rho_{\mathrm{spur}}$ and training budget. Below $\rho{=}1$ more training \emph{removes}
the shortcut; at $\rho{=}1$ it entrenches it.}
\label{tab:emergence}
\begin{tabular}{lccc}
\toprule
$\rho_{\mathrm{spur}}$ & 400 steps & 1500 steps & 6000 steps \\
\midrule
0.80 & 0.0033 & 0.0018 & 0.0010 \\
0.90 & 0.0049 & 0.0021 & 0.0011 \\
0.95 & 0.0099 & 0.0028 & 0.0012 \\
0.99 & 0.0519 & 0.0216 & 0.0015 \\
1.00 & 0.1046 & 0.1131 & 0.1320 \\
\bottomrule
\end{tabular}
\end{table}

\begin{figure}[h]
\centering
\includegraphics[width=0.48\linewidth]{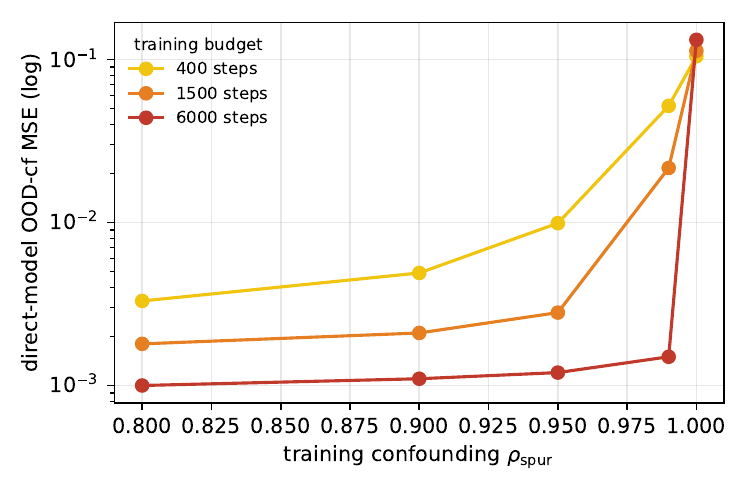}
\hfill
\includegraphics[width=0.48\linewidth]{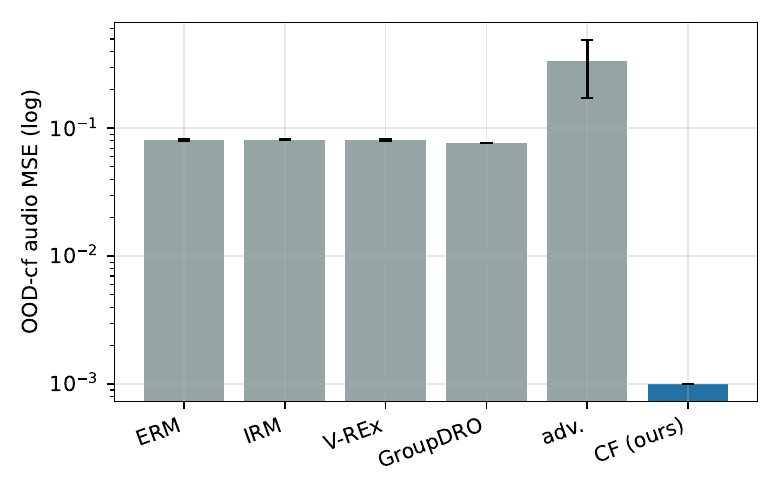}
\caption{\textbf{Left:} the shortcut is graded in confounding and training budget---only
at $\rho_{\mathrm{spur}}{\to}1$ does more training entrench it. \textbf{Right:}
domain-generalization / invariance baselines (given two training environments) versus the
counterfactual intervention (given none); log scale, $\pm$std over $3$ seeds.}
\label{fig:emergence}
\end{figure}

\subsection{Long-tail events and partial coverage.}
Two further slices confirm the mechanism is targeted, not diffuse. With Zipf-distributed
event frequencies, rare (tail) events suffer at least as much as frequent ones (OOD-cf
$0.127$ vs.\ $0.117$): they have the fewest decorrelated examples to learn the causal path
from. And when only \emph{some} event classes are ever seen decorrelated in training, the
shortcut is taken \emph{exactly} on the classes that were never decorrelated (OOD-cf
$0.049$ on confounded-only classes vs.\ $0.0012$ on classes seen decorrelated)---a
class-level version of the slice-locality of Section~\ref{sec:experiments}.

\subsection{Comparison to invariance / domain-generalization baselines.}
A natural question is whether standard invariance methods already solve this. We compare
IRM \citep{arjovsky2019invariant}, V-REx \citep{krueger2021rex},
GroupDRO \citep{sagawa2020group}, and adversarial nuisance removal
\citep{ganin2016domain} against the
counterfactual intervention (Table~\ref{tab:dgbase}, Figure~\ref{fig:emergence} right).
Because these methods require \emph{multiple environments} with varying spurious
correlation, we grant them two training environments with different colour--event maps.
(This multi-environment setup is a harder learning problem than the single confounded
distribution used elsewhere, which is why the absolute IID errors here, $\sim\!0.08$, sit
above the $\sim\!0.001$ of the other tables; the comparison of interest is within this
table, across methods.) The counterfactual intervention is given no environment labels.
With only two
environments whose spurious correlations still overlap, IRM, V-REx, and GroupDRO barely
improve on ERM (OOD-cf $\approx0.08$), and adversarial removal is unstable
($0.33\pm0.16$), while the intervention reaches $0.0010$. The lesson is not that these
methods are bad---given enough diverse environments they can work---but that they trade
one requirement (a known, faithful nuisance intervention) for another (many environments
that decorrelate the nuisance), and the former is available precisely when one can
construct the counterfactual.

\begin{table}[h]
\centering
\caption{Invariance / domain-generalization baselines (two training environments) vs.\ the
counterfactual intervention (no environments). Mean $\pm$ std over $3$ seeds.}
\label{tab:dgbase}
\begin{tabular}{lcc}
\toprule
Method & IID MSE & OOD-cf MSE \\
\midrule
ERM (direct)              & 0.0813 $\pm$ 0.0021 & 0.0810 $\pm$ 0.0012 \\
IRM                       & 0.0820 $\pm$ 0.0019 & 0.0816 $\pm$ 0.0008 \\
V-REx                     & 0.0813 $\pm$ 0.0021 & 0.0811 $\pm$ 0.0012 \\
GroupDRO                  & 0.0769 $\pm$ 0.0020 & 0.0768 $\pm$ 0.0004 \\
Adversarial nuisance removal & 0.3309 $\pm$ 0.1630 & 0.3329 $\pm$ 0.1596 \\
CF intervention (ours)    & 0.0010 $\pm$ 0.0000 & \textbf{0.0010 $\pm$ 0.0000} \\
\bottomrule
\end{tabular}
\end{table}

\paragraph{Baseline implementation details.} For reproducibility we specify the baselines
exactly. \emph{Environments.} All DG baselines receive two training environments that share
the $(\Ev,s)$ structure but differ in the colour--event map: environment~1 uses
$\nuis=\Ev\bmod M$ and environment~2 a fixed derangement $\nuis=\sigma(\Ev)$ with
$\sigma(\Ev)\neq\Ev$, so the spurious correlation is present in both but not identical;
the event marginal is matched across environments. All methods share the same backbone
(the encoder of Appendix~\ref{app:details}) and are trained for the same number of steps
with Adam at the same base learning rate as the direct model. \emph{IRM}
\citep{arjovsky2019invariant}: penalty weight $\lambda_{\mathrm{IRM}}{=}10^{4}$ after a
$500$-step ERM warm-up (swept $\{10^{2},10^{3},10^{4},10^{5}\}$; the reported value is the
best OOD-cf). \emph{V-REx} \citep{krueger2021rex}: variance penalty $\beta{=}10$ on the
per-environment risks, same warm-up (swept $\{1,10,100\}$). \emph{GroupDRO}
\citep{sagawa2020group}: groups are the two environments, group-weight step size
$\eta_q{=}0.01$. \emph{Adversarial nuisance removal} \citep{ganin2016domain}: a
gradient-reversal head predicts $\nuis$ from the audio representation with adversary weight
$\alpha{=}1.0$ (its high variance in Table~\ref{tab:dgbase} reflects the well-known
instability of adversarial training at this scale, not a tuning artefact---it was the best of
$\alpha\in\{0.1,1,10\}$). None of these methods is given the counterfactual pairing; the CF
intervention is given no environment labels.

\subsection{Where to apply the consistency constraint.}
\label{app:depth}
Our claim that reshaping the representation is insufficient should not rest on a single
layer, so we apply the consistency penalty at every depth (Table~\ref{tab:replevel}). The
effect generally improves toward later layers: an input-level penalty does nothing,
penalties at intermediate layers help only partially (and not strictly monotonically---
layer~2 is slightly worse than layer~1), and only an \emph{output}-level (prediction)
penalty fully removes the shortcut. Consistency deep in the network still leaves a usable
nuisance trace in the later layers.

\begin{table}[h]
\centering
\caption{Consistency applied at increasing depth (mean $\pm$ std, $3$ seeds). Only
output-level (prediction) consistency fully blocks the shortcut.}
\label{tab:replevel}
\begin{tabular}{lcc}
\toprule
Consistency at & IID MSE & OOD-cf MSE \\
\midrule
none / input & 0.0010 $\pm$ 0.0000 & 0.1836 $\pm$ 0.0018 \\
layer 1      & 0.0015 $\pm$ 0.0000 & 0.0827 $\pm$ 0.0030 \\
layer 2      & 0.0016 $\pm$ 0.0001 & 0.0979 $\pm$ 0.0021 \\
layer 3      & 0.0011 $\pm$ 0.0000 & 0.0018 $\pm$ 0.0001 \\
output       & 0.0010 $\pm$ 0.0000 & \textbf{0.0010 $\pm$ 0.0000} \\
\bottomrule
\end{tabular}
\end{table}

\subsection{Partially-informative nuisances: the all-or-nothing result is conditional.}
\label{app:partialinfo2}
Proposition~\ref{prop:partial} and its experiment assumed each nuisance
\emph{individually} determines the event, which is what makes partial augmentation useless
there. When instead each of $L{=}3$ nuisances predicts the event only with probability
$q<1$, no single one is a sufficient shortcut, and augmenting a subset already helps: the
response becomes graded (Table~\ref{tab:partialinfo}). The all-or-nothing behaviour is
therefore a property of the individually-sufficient regime, not a universal law.

\begin{table}[h]
\centering
\caption{Partially-informative nuisances: OOD-cf MSE as a function of how many of $L{=}3$
nuisances are augmented, for different per-nuisance predictiveness $q$. At $q{=}1$ the row
is all-or-nothing; for $q<1$ partial augmentation already helps.}
\label{tab:partialinfo}
\begin{tabular}{lcccc}
\toprule
$q$ & augment 0/3 & 1/3 & 2/3 & 3/3 \\
\midrule
1.00 & 0.1597 & 0.1710 & 0.2005 & 0.0001 \\
0.80 & 0.0040 & 0.0005 & 0.0002 & 0.0001 \\
0.60 & 0.0006 & 0.0002 & 0.0001 & 0.0001 \\
\bottomrule
\end{tabular}
\end{table}

\section*{AI Use Statement}
Parts of this paper's text, code and analysis were produced with the assistance of generative
AI. The authors take full responsibility for the final content of this paper, including any
text produced with the assistance of generative AI.

\end{document}

%% file: math_commands.tex
\usepackage{amsmath,amsfonts,bm}

\def\eqref#1{equation~\ref{#1}}

\def\1{\bm{1}}

\DeclareMathAlphabet{\mathsfit}{\encodingdefault}{\sfdefault}{m}{sl}
\SetMathAlphabet{\mathsfit}{bold}{\encodingdefault}{\sfdefault}{bx}{n}

